\def\eqref#1{equation~\ref{#1}}
\def\1{\bm{1}}
\DeclareMathAlphabet{\mathsfit}{\encodingdefault}{\sfdefault}{m}{sl}
\SetMathAlphabet{\mathsfit}{bold}{\encodingdefault}{\sfdefault}{bx}{n}
\newenvironment{lemgray}{%
  \par\noindent
  \def\lemgray@pending{}%
  \let\lemgray@orig@footnote\footnote
  \renewcommand{\footnote}[1]{%
    \footnotemark    \g@addto@macro\lemgray@pending{\footnotetext{##1}}%
  }%
  \begin{tcolorbox}[
    colback=gray!10, 
    colframe=gray!10,
    boxrule=0pt,
    left=0pt,right=0pt,
    top=4pt,bottom=
    4pt,
    boxsep=6pt,
    before skip=6pt, after skip=6pt
  ]%
}{%
  \end{tcolorbox}%
  \lemgray@pending
  \let\footnote\lemgray@orig@footnote
  \par
}
\newcommand{\latinabbrev}[1]{\textit{#1}}
\newcommand{\eg}{\latinabbrev{e.g.}}
\title{Data Value in the Age of Scaling: Understanding LLM Scaling Dynamics Under Real–Synthetic Data Mixtures}
\author{ Haohui Wang \\
	Virginia Tech \\
	\texttt{haohuiw@vt.edu} \\
	\And
	Jingyuan Qi \\
	Virginia Tech \\
	\And
	Jianpeng Chen \\
	Virginia Tech \\
	\And
	Jun Wu \\
	Michigan State University \\
	\AND
	Lifu Huang \\
	University of California, Davis \\
    \And
	Lecheng Zheng \\
	Virginia Tech \\
    \And
	Kevin Choi \\
	Deloitte \\
    \And
	Balaji Veeramani \\
	Deloitte \\
	\AND
	Edward Bowen \\
	Deloitte \\
    \And
	Alison Hu \\
	Deloitte \\
    \And
	Tyler Cody \\
	Virginia Tech \\
    \And
	Dawei Zhou \\
	Virginia Tech 
}
\begin{document}

\maketitle

\begin{abstract}
The rapid progress of large language models (LLMs) is fueled by the growing reliance on datasets that blend real and synthetic data. While synthetic data offers scalability and cost-efficiency, it often introduces systematic distributional discrepancies, particularly underrepresenting long-tail knowledge due to truncation effects from data generation mechanisms like top-$p$ sampling, temperature scaling, and finite sampling. These discrepancies pose fundamental challenges in characterizing and evaluating the utility of mixed real-synthetic datasets. In this paper, we identify a three-phase scaling behavior characterized by two breakpoints that reflect transitions in model behavior across learning head and tail knowledge. We further derive an LLM generalization bound designed for real and synthetic mixtures, revealing several key factors that govern their generalization performance. Building on our theoretical findings, we propose an effective yet efficient data valuation method that scales to large-scale datasets. Comprehensive experiments across four tasks, including image classification, sentiment classification, instruction following, and complex reasoning, demonstrate that our method surpasses state-of-the-art baselines in data valuation with significantly low computational cost.
\end{abstract}

\vspace{-1em}
\section{Introduction}\label{sec:intro}
\vspace{-1em}
Large language models (LLMs) have achieved remarkable advances, driving unprecedented transformations across various tasks, including language understanding~\citep{radford2018improving}, generation~\citep{liang2024controllable}, instruction following~\citep{lou2024large}, and reasoning~\citep{plaat2024reasoning}. 
Despite these achievements, their performance is largely driven by the scale and quality of training datasets~\citep{brown2020language,hoffmann2022training}. To mitigate the scarcity and high cost of high-quality real data, many modern training pipelines incorporate synthetically generated data, which can be scaled efficiently through data augmentation or controlled generation~\citep{thakur2023leveraging, zhang2024llm}. While synthetic data plays a critical role in scaling data at reduced cost, it often introduces systematic distributional discrepancies, resulting in unintended negative impacts on model performance~\citep{chen2024unveiling}. In particular, synthetic datasets inherently bias training towards frequently occurring knowledge while neglecting rare but significant knowledge~\citep{seddik2024how}. Consequently, such discrepancies can degrade the overall generalization capabilities of LLMs on downstream tasks, leading to model collapse and failure to capture underrepresented knowledge~\citep{DBLP:journals/nature/ShumailovSZPAG24}. 

One potential explanation for this challenge lies in the inherent long-tail distribution of knowledge present in real-world data. Empirical studies have shown that real-world knowledge typically follows a long-tail distribution, where a small amount of prevalent (``head'') knowledge appears frequently, while numerous rare (``tail'') knowledge occur infrequently but collectively represent a significant portion of essential knowledge~\citep{zhang2024knowledge}, as shown by the orange curve in Figure~\ref{fig:cutoff}. For example, large language models usually perform well on general questions (e.g., normal disease diagnosis) but struggle when answering rare or highly specific questions (e.g., rare disease diagnosis)~\citep{DBLP:conf/icml/KandpalDRWR23}. Synthetic data generation methods often exacerbate this imbalance in the distribution of knowledge because their inherent generation biases towards common knowledge make rare knowledge even more scarce in the training data. As a result, LLMs trained on datasets of real and synthetic mixtures exhibit complex scaling behaviors, reducing learning efficiency and generalization in pre-training and fine-tuning steps. These observations motivate us to ask two fundamental research questions: \textbf{(Q1)} What are the scaling behaviors of large language models when trained on real and synthetic mixtures, and how do these behaviors impact the acquisition of tail knowledge? and \textbf{(Q2)} How can we develop an salient data valuation framework to identify valuable subsets of data, thereby better guiding the training process under real and synthetic mixtures?

\begin{figure}[t]
\vspace{-1em}
    \centering
    \begin{minipage}[t]{0.48\textwidth}
    \centering
    \includegraphics[width=\textwidth]{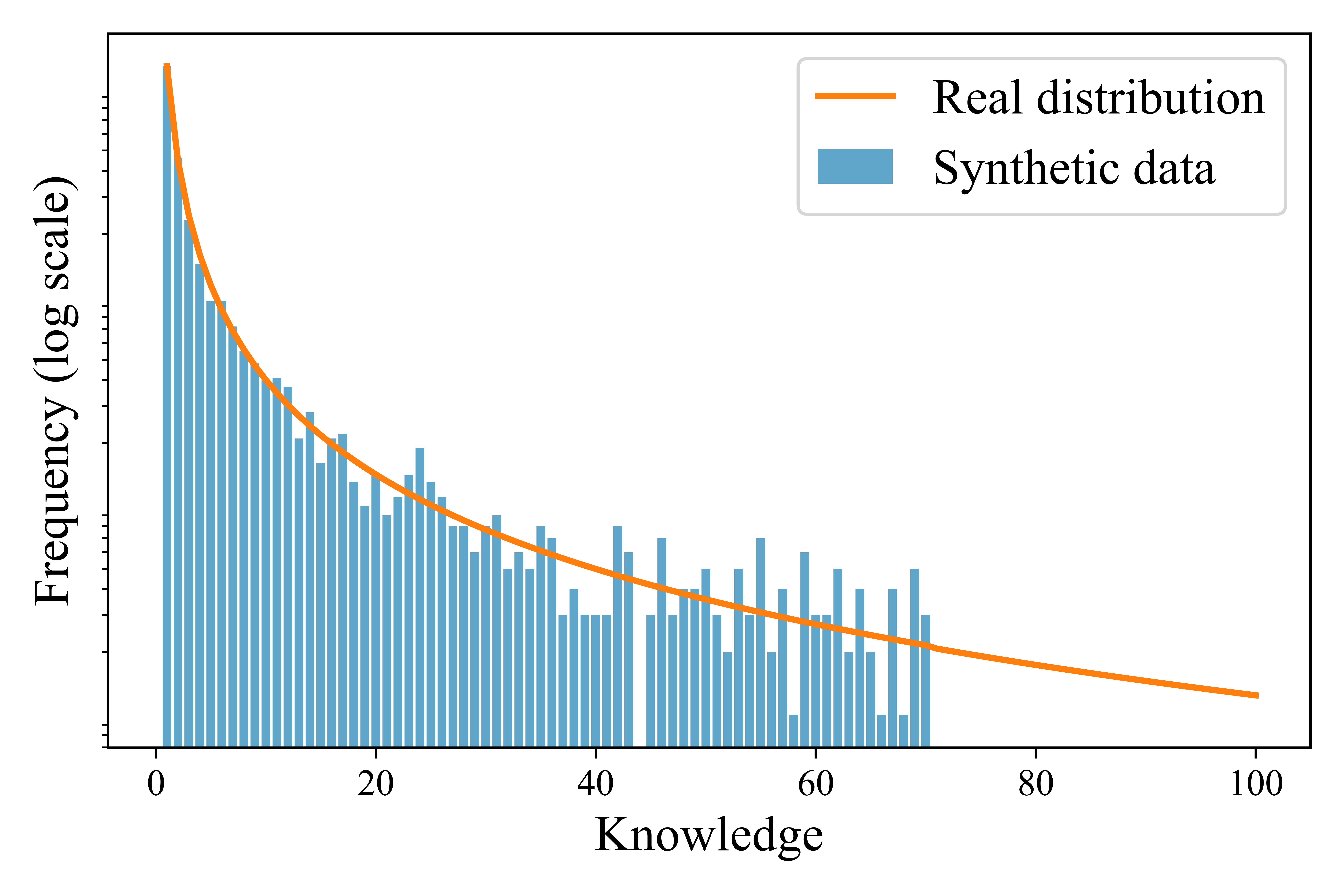}
    \vspace{-2em}
    \caption{The real-world knowledge follows a long-tail distribution (illustrated with the greatest common divisor task~\citep{charton2023can}). Synthetic data is often sampled only from the head knowledge, leading to a truncated tail.}
    \label{fig:cutoff}
    \end{minipage}%
    \hfill
    \begin{minipage}[t]{0.49\textwidth}
        \centering
        \includegraphics[width=\textwidth]{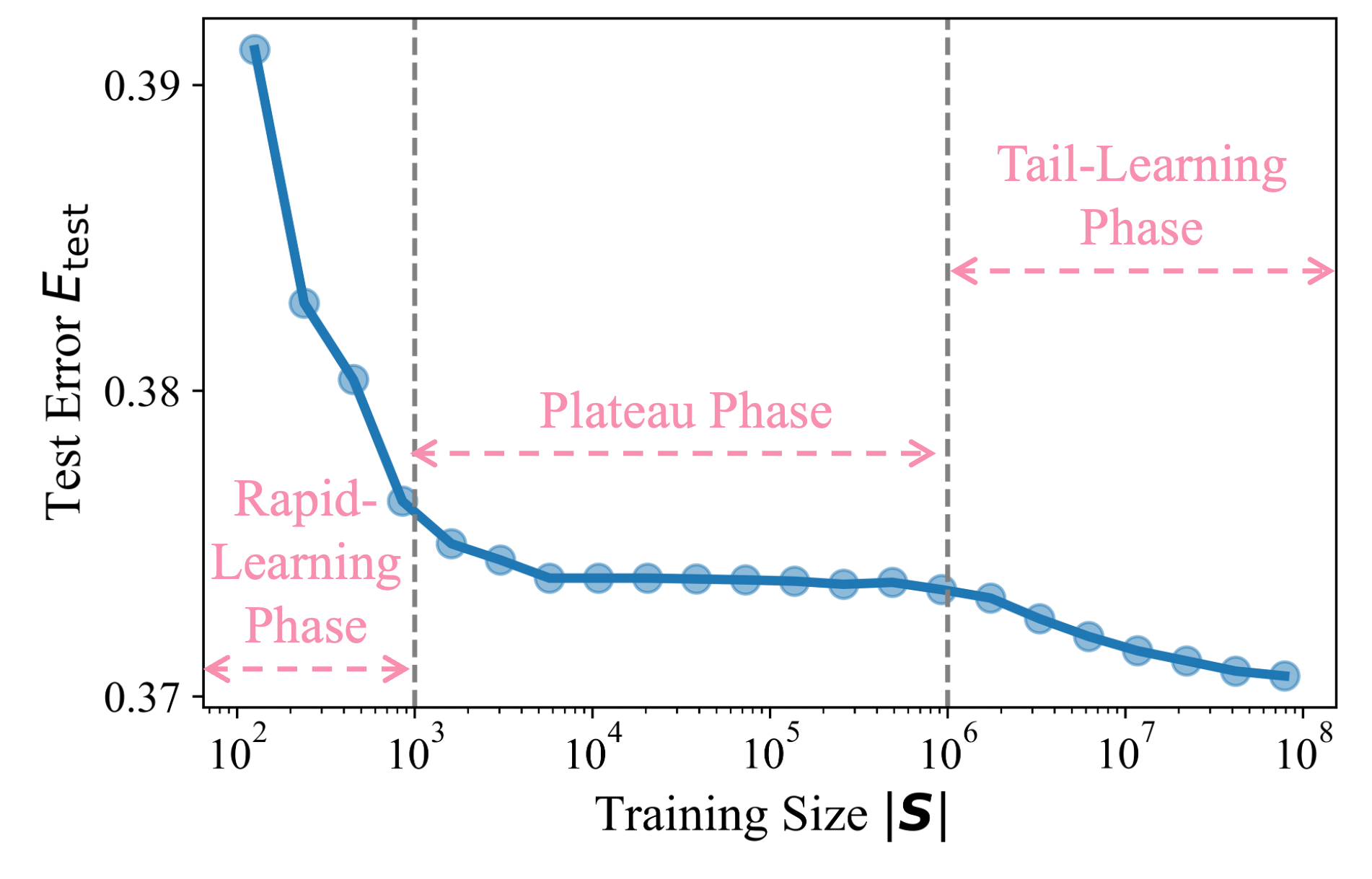}
        \vspace{-2em}
        \caption{Fine-grained three-phase scaling behavior on real and synthetic mixtures, illustrated with the greatest common divisor task~\citep{charton2023can}.}
        \label{fig:motivation}
    \end{minipage}
\vspace{-1.5em}
\end{figure}

To address the first question \textbf{(Q1)} on the scaling behaviors of LLMs trained on real–synthetic data mixtures, we identify a \emph{three-phase scaling pattern} in the training process of LLMs, as illustrated in Figure~\ref{fig:motivation}. In particular, an initial \emph{Rapid-Learning Phase} dominated by frequent (``head'') knowledge present abundantly in both real and synthetic data; a subsequent \emph{Plateau Phase}, in which additional data provides diminishing returns due to the limited coverage of rare (``tail'') knowledge in synthetic data; and a final \emph{Tail-Learning Phase}, where sufficient real data containing the tail knowledge enables further performance gains. We further introduce a novel theoretical framework based on the LLM generalization bound. This framework reveals the generalization error in terms of empirical losses of real and synthetic mixtures, the distribution discrepancies between training and test distributions, the neural tangent kernel (NTK~\citep{jacot2018neural}) reflecting training dynamics, and the proportion of real data in the training set.  

To empirically guide the LLM training process under real and synthetic mixtures and address the second research question \textbf{(Q2)}, we propose a scalable and theoretically grounded data valuation framework. Traditional data valuation techniques, such as Leave-One-Out (LOO~\citep{koh2017understanding}) and Shapley Values (SV~\citep{ghorbani2019data}), require retraining the model multiple times on different subsets, which is computationally infeasible for models with millions of parameters~\citep{koh2017understanding,jia2019towards}. Our proposed data valuation framework is directly derived from our LLM generalization bound, enabling computationally efficient and theoretically grounded estimation of the contributions of individual data subsets without retraining, thereby potentially improving training efficiency to guide the training process under real and synthetic mixtures.

Finally, we empirically validate both our theoretical findings and the effectiveness of the proposed data valuation method through extensive experiments. Specifically, we evaluate our framework across four representative tasks, covering image classification, sentiment classification, instruction-following, and complex reasoning. Notably, we observe the predicted three-phase scaling behavior in an image classification task explicitly characterized by a known long-tail distribution. Furthermore, experimental results demonstrate that our valuation method outperforms existing baselines in effectively identifying high-value data subsets with a low computation cost. In particular, our valuation scores exhibit the highest correlation with ground-truth compared to the baseline methods, peaking at $\sim 20\times$ in the strongest case. We open-source our code at \url{https://github.com/wanghh7/3phaseLLM}.

\vspace{-1em}
\section{Preliminary}\label{sec:preliminary}
\vspace{-1em}
In this section, we introduce the background that is pertinent
to our work. Next, we briefly review notations, LLM scaling law, and LLM generalization.

\textbf{Notations. }Modern LLMs are increasingly trained on datasets composed of real and synthetic mixtures. Let $\bm{S} = \bm{S}_1 \cup \bm{S}_2$ denote the training dataset, where $\bm{S}_1 \sim \mathcal{D}$ consists of real data drawn from the true distribution $\mathcal{D}$, and $\bm{S}_2 \sim \mathcal{D}^{\prime}$ consists of synthetic data generated by model with an associated distribution $\mathcal{D}^{\prime}$. We assume that the overall training distribution can be written as:
\begin{equation}
	\mathcal{D}_S = \pi \mathcal{D} + (1 - \pi) \mathcal{D}^{\prime},
\end{equation}
where $\pi \in [0,1]$ is the proportion of real data in the training set. Suppose the total number of training samples is $|\bm{S}|$, then $\pi |\bm{S}|$ samples are drawn from $\mathcal{D}$ and $(1 - \pi)|\bm{S}|$ from $\mathcal{D}^{\prime}$. Model performance is evaluated on a test set $\bm{T}$ of size $|\bm{T}|$, drawn from the distribution $\mathcal{D}_T$. Let $\mathcal{L}_{\bm{S}}(f)$ denote the empirical error of model $f$ on dataset $\bm{S}$, and $\mathcal{L}_{\mathcal{D}_T}(f)$ denote its generalization error on $\mathcal{D}_T$.

\textbf{LLM Scaling Law. }
Scaling laws reveal how model performance improves with increasing dataset size, model parameters, and computational resources and guide large-scale training strategies~\citep{kaplan2020scaling, hoffmann2022training,hernandez2021scaling}. In practical scenarios, a critical challenge arises from the reliance on synthetic data, which may lack the coverage of real-world data distribution. This reliance can lead to model collapse: as the model fits more synthetic samples, it reinforces biases from synthetic data $\mathcal{D}^{\prime}$, exhibiting severe generalization degradation relative to the true distribution $\mathcal{D}$~\citep{DBLP:journals/nature/ShumailovSZPAG24, DBLP:journals/corr/abs-2410-04840, DBLP:conf/icml/DohmatobFYCK24, DBLP:conf/nips/JainMS24}. 
Recent efforts attempt to extend scaling laws under real and surrogate data, but typically put strong modelling assumptions. For example, a common design draws independent samples from real and synthetic distributions that both belong to the Gaussian distribution $\bm{x} \sim \mathcal{N}(\mu, \bm{\Sigma})$, with different parameters. However, these efforts often overlook the long-tail nature of real-world knowledge.

\textbf{LLM Generalization.} To theoretically understand the LLM generalization, the neural tangent kernel has emerged as a powerful analytical framework for characterizing the training dynamics of neural networks with gradient descent~\citep{jacot2018neural}. Consider a $L$-layer LLM with $m_l$ parameters in layer $l = 1, \ldots, L$. Following prior literature~\citep{lee2019wide}, we assume $m_1 = \dots = m_{L-1} = m$ and $m_L=1$ to simplify our analysis. Based on the formulation above, the NTK $\Theta \in \mathbb{R}^{|\bm{S}| \times |\bm{S}|}$ of a model $f(\bm{x}; \bm{\theta})$ on the dataset $\bm{S}$ is defined as
\begin{equation}
\bm{\Theta}(\bm{x}, \bm{x}^{\prime}; \bm{\theta}) = \nabla_{\bm{\theta}} f(\bm{x}; \bm{\theta})^\top \nabla_{\bm{\theta}} f(\bm{x}^{\prime}; \bm{\theta}),
\end{equation}
where $\bm{x}$ (or $\bm{x}^{\prime}$) denotes any data point in dataset $\bm{S}$. Interestingly, as $m_1, \dots, m_{L-1} \rightarrow \infty$, the NTK $\bm{\Theta}_0$ based on the initialized model parameters $\bm{\theta}_0$ will finally converge to a deterministic form $\bm{\Theta}_\infty$~\citep{jacot2018neural,DBLP:conf/icml/YangL21a,DBLP:conf/nips/CaoG19a}. However, existing LLM generalization bounds do not explicitly account for training on real and synthetic mixtures.

\textbf{Problem Definition.} The goal of this paper is to analyze LLMs under real and synthetic mixtures from two complementary perspectives. In particular, given the training set $\bm{S}$ contain $\pi |\bm{S}|$ samples from true distribution $\mathcal{D}$ and $(1 - \pi)|\bm{S}|$ from synthetic distribution $\mathcal{D}^{\prime}$, how can we {(1)} theoretically reveal the scaling behavior of LLM model $f$ as detailed in Section~\ref{sec:theory}? and {(2)} how can we develop a data valuation framework that estimates the contribution of each data subset in $\bm{S}$ to the model’s performance as detailed in Section~\ref{sec:method}?

\vspace{-1em}
\section{Theoretical Analysis}\label{sec:theory}
\vspace{-1em}

In this section, we first analyze a fine-grained three-phase transition in the scaling behavior of LLMs when trained on real and synthetic mixtures. We then derive a novel LLM generalization bound for real and synthetic mixtures, which reveals four key factors that govern the generalization performance.

\textbf{Three Phase Transitions. }To understand the scaling behaviors of LLMs when trained on real and synthetic mixtures and how these behaviors impact the acquisition of tail knowledge \textbf{(Q1)}, we analyze the behavior of LLMs under a realistic training setup. While prior work has investigated scaling behaviors in the context of model collapse, these studies~\citep{feng2024beyond,DBLP:conf/nips/DohmatobFK24,DBLP:journals/corr/abs-2410-04840,DBLP:conf/icml/DohmatobFYCK24} often rely on strong assumptions about model and data distributions (\eg, deterministic settings, simplified linear regression models, or infinite original samples). In contrast, we consider a practical scenario where the knowledge $i$ in real data exhibits a long-tail distribution $\mathcal{D}$. In natural language datasets, the word or token frequencies often exhibit long-tail distributions (Zipf’s law~\citep{zipf2013psycho}), which means a few ``head'' tokens occur extremely frequently, while many ``tail'' tokens appear rarely. We therefore model the true distribution $\mathcal D$ over knowledge $i$ by:
\begin{equation}
p_i \propto i^{-\beta}, \quad i = 1, 2, \dots,
\end{equation}
where $\beta > 1$ characterizes the tail heaviness. Furthermore, when generating synthetic data via LLMs, the resulting data distribution $\mathcal{D}^{\prime}$ typically exhibits truncation in the tail. Specifically, the techniques of synthetic data generation inherently truncate or narrow the original distribution of generated tokens, thereby cutting off or diminishing probabilities for less frequent (tail) tokens~\citep{DBLP:conf/icml/DohmatobFYCK24}. For example, top-$p$ (nucleus) sampling, where tokens beyond a cumulative probability threshold are discarded; temperature scaling, which modifies the probability distribution sharpness; or finite-sample biases, which restrict observation of low-frequency tokens. We assume the synthetic data distribution $p^\prime$ mirrors the true distribution $p$ up to a finite cutoff $k$: $p^{\prime}_i \propto i^{-\beta}$ for $i \leq k$, and $p^{\prime}_i = 0$ for $i>k$. Therefore, the training dataset of total size $|\bm{S}|$ is composed of real data $p_i$ with proportion $\pi$ and synthetic data $p_i^{\prime}$ with proportion $(1 - \pi)$, where data is drawn from the distribution $\mathcal{D}_S$ with probability:
\begin{equation}
    q_i = \pi p_i + (1 - \pi)p^{\prime}_i.
\end{equation}

We further assume that if knowledge $i$ is observed in the training set, it is predicted correctly with probability $\rho(i) = a i^{-\alpha}, a>0$; if knowledge $i$ is not observed, the probability is $\gamma(i) = b i^{-\lambda}, b>0$. Under the setting, we establish the following lemma for the test error on $\mathcal{D}_T$ of this model with respect to the true data distribution $\mathcal{D}_S$: $\mathcal{L}_{\text{test}} =\mathbb{E}_{(\bm{x},y)\sim \mathcal{D}_T}[\ell(f_{\mathcal{D}_S}(\bm{x}), y)]$, where $f_{\mathcal{D}_S}$ is the model on $\mathcal{D}_S$ and $\ell$ is the loss function:

\begin{lemgray}
\begin{restatable}[Scaling Behavior with Three phases]{lemma}{scalingLaw}
\label{lemma:scaling}
Consider training data where the probability of knowledge $i$ is $q_i=\pi p_i + (1-\pi)p^{\prime}_i$, where $p_i\propto i^{-\beta}$ and $p^{\prime}_i$ is cut off at rank $k$ as defined above. The test error $\mathcal{L}_{\text{test}}$ exhibits distinct scaling regimes characterized by two breakpoints at sample sizes $|\bm{S}|=k^{\beta}$ and $|\bm{S}|=k^{\beta}/\pi$. We have\footnote{The notation $g(n)\asymp h(n)$ means that $c_1 h(n) \leq g(n) \leq c_2 h(n)$ for sufficiently large $n$ and absolute constants $c_1, c_2>0$.}:\\
\textbf{Phase 1 (Rapid-Learning): }$|\bm{S}| \leq c_1 k^{\beta}$, where $c_1$ is absolute constant,
\begin{equation}
    \mathcal{L}_{\text{test}} \asymp a\,|\bm{S}|^{\frac{1-\alpha-\beta}{\beta}} - b\,|\bm{S}|^{\frac{1-\lambda-\beta}{\beta}} + a\,k^{1-\alpha-\beta} - b\,k^{1-\lambda-\beta} + k^{1-\beta}.
\end{equation}
\textbf{Phase 2 (Plateau): } $c_1 k^{\beta} < |\bm{S}| < c_2 k^{\beta}/\pi$, where $c_2$ is absolute constant, $\mathcal{L}_{\text{test}}$ enters a transition state as the limited presence of tail knowledge prevents the rapid learning.\\
\textbf{Phase 3 (Tail-Learning): }$|\bm{S}| \geq c_2 k^{\beta}/\pi$,
\begin{equation}
    \mathcal{L}_{\text{test}} \asymp a(\pi |\bm{S}|)^{\frac{1-\alpha-\beta}{\beta}} - b(\pi |\bm{S}|)^{\frac{1-\lambda-\beta}{\beta}} + k^{1-\beta}.
\end{equation}
\end{restatable}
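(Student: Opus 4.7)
The plan is to compute $\mathcal{L}_{\text{test}}$ by decomposing it over knowledge categories and analyzing the observation probability $o(i) := 1-(1-q_i)^{|\bm{S}|}$ that category $i$ appears at least once in the training sample. Substituting the given correct-prediction probabilities and rearranging gives
\[ \mathcal{L}_{\text{test}} \;=\; 1 \;-\; b\sum_{i\ge 1} p_i\, i^{-\lambda} \;-\; \sum_{i\ge 1} p_i\, o(i)\,(a i^{-\alpha} - b i^{-\lambda}), \]
so the task reduces to estimating the ``learning gain'' $\sum_i p_i\, o(i)\,(ai^{-\alpha}-bi^{-\lambda})$. Because $q_i = p_i$ on the head $i\le k$ but collapses to $\pi p_i$ on the truncated tail $i>k$, the soft transition $o(i)\approx 1-\exp(-q_i|\bm{S}|)$ has two distinct effective cutoffs: $r_1 := |\bm{S}|^{1/\beta}$ on the head and $r_3 := (\pi|\bm{S}|)^{1/\beta}$ on the tail. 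The three regimes in the lemma correspond exactly to the positions of $r_1,r_3$ relative to $k$, with the breakpoints $|\bm{S}|=k^\beta$ and $|\bm{S}|=k^\beta/\pi$ equivalent to $r_1=k$ and $r_3=k$, respectively.

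In Phase 1 ($r_1,r_3\le k$), knowledge is learned for $i\lesssim r_1$ and essentially unlearned for $i>r_1$, with the truncated region $i>k$ especially starved. Approximating $o(i)$ by the sharp step $\mathbf{1}[i\le r_1]$ and grouping contributions from $i\le r_1$, $r_1<i\le k$, and $i>k$, the standard Zipf-tail estimates
\[ \sum_{i>n} p_i\, i^{-\gamma} \;\asymp\; n^{1-\gamma-\beta}\quad(\gamma+\beta>1), \qquad \sum_{i>n} p_i \;\asymp\; n^{1-\beta}, \]
combine to yield the claimed expansion: the $|\bm{S}|$-dependent terms $a|\bm{S}|^{(1-\alpha-\beta)/\beta}$ and $-b|\bm{S}|^{(1-\lambda-\beta)/\beta}$ come from the active cutoff at $r_1$, while the residual constants $ak^{1-\alpha-\beta}$, $-bk^{1-\lambda-\beta}$, and $k^{1-\beta}$ capture the irreducible error contributed by the synthetic-support-free region $i>k$. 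Phase 3 ($r_3\ge k$) is analogous with the binding cutoff now $r_3$: the entire head $i\le k$ is saturated and the same derivation with $r_3$ in place of $r_1$ produces the $\pi|\bm{S}|$-scaled terms plus the persistent $k^{1-\beta}$ tail residual. For Phase 2 ($r_1>k>r_3$), I would argue that synthetic data cannot push the effective cutoff past $k$, while real data alone is still too sparse to cover beyond $k$; consequently, the learning gain cannot advance, and the loss is sandwiched between the Phase 1 endpoint value and the Phase 3 starting value, giving the plateau.

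The main technical obstacle is replacing the smooth profile $o(i)=1-(1-q_i)^{|\bm{S}|}$ by a sharp indicator while preserving the $\asymp$ bounds. My approach is a three-slab Abel/partial-summation argument (well-observed, transition, well-unobserved slabs), in which the transition window around $q_i|\bm{S}|\asymp 1$ contributes at most a constant multiple of the boundary term because $1-e^{-t}$ is bounded by $\min(t,1)$ and differs from $\mathbf{1}[t\ge 1]$ only on a geometric-width zone around $t=1$. A secondary bookkeeping issue is producing the $k$-dependent residuals with the precise exponents shown: this requires splitting the integrals $\int p(x)x^{-\gamma}dx$ at $x=k$ and tracking which pieces cancel against the total-mass normalization, and verifying that in Phases 1 and 3 the $|\bm{S}|$-dependent terms dominate the $k$-dependent residuals in the correct direction so the $\asymp$ envelope holds throughout the stated range.
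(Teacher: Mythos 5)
Your decomposition $\mathcal{L}_{\text{test}} = 1 - b\sum_i p_i i^{-\lambda} - \sum_i p_i\,o(i)\,(a i^{-\alpha}-b i^{-\lambda})$ with $o(i)=1-(1-q_i)^{|\bm{S}|}$ is exactly the paper's main term (the paper writes it as $\sum_i p_i[(1-(1-q_i)^{|\bm{S}|})(1-\rho(i)) + (1-q_i)^{|\bm{S}|}(1-\gamma(i))]$ and splits at $i=k$ using $q_i\asymp p_i$ on the head and $q_i=\pi p_i$ on the tail). Your two effective cutoffs $r_1=|\bm{S}|^{1/\beta}$ and $r_3=(\pi|\bm{S}|)^{1/\beta}$, and the breakpoints $r_1=k$, $r_3=k$, match the paper's regime analysis, which it carries out by rewriting the sums as incomplete gamma functions $\Gamma(\cdot,|\bm{S}|k^{-\beta})$ and $\Gamma(\cdot,\pi|\bm{S}|(k+1)^{-\beta})$ and tracking when their arguments cross $\Theta(1)$; your three-slab sharp-cutoff argument is an equivalent way to extract the same asymptotics, so the $|\bm{S}|$-dependent terms and the $a k^{1-\alpha-\beta}, -b k^{1-\lambda-\beta}$ residuals come out correctly either way.

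The genuine gap is the standalone $+k^{1-\beta}$ term. In the paper this term does \emph{not} come from the coverage computation at all: the proof first passes through the generalization-bound chain $\mathcal{L}_{\mathcal{D}_T}(f)\lesssim \mathcal{L}_{\mathcal{D}}(f) + d_{\mathcal{H}}(\mathcal{D}_S,\mathcal{D}) + \cdots$ and then evaluates the discrepancy term separately as $\mathbb{E}[d_{\mathcal{H}}(\mathcal{D}_S,\mathcal{D})]\asymp \sum_{i>k}p_i \asymp k^{1-\beta}$. Your purely combinatorial decomposition contains no analogue of this term, and your attribution of $k^{1-\beta}$ to the ``synthetic-support-free region $i>k$'' of the learning-gain sum does not work inside your own expansion: every contribution from $i>k$ there is weighted by $(a i^{-\alpha}-b i^{-\lambda})$, so it sums to order $k^{1-\alpha-\beta}$ and $k^{1-\lambda-\beta}$, never $k^{1-\beta}$. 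Since $\alpha>0$, $k^{1-\beta}$ strictly dominates $k^{1-\alpha-\beta}$, so this is not absorbed by the $\asymp$; in particular it sets the Phase-3 asymptotic floor of the loss, which your derivation would understate. To recover the stated lemma you would need to add the distributional-mismatch component (the paper's $d_{\mathcal{H}}(\mathcal{D}_S,\mathcal{D})$ term) on top of your coverage calculation, rather than deriving everything from the coverage model alone.
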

\end{lemgray}

\begin{wrapfigure}{r}{0.45\textwidth}
\centering
\vspace{-1.2em}
\includegraphics[width=0.45\textwidth]{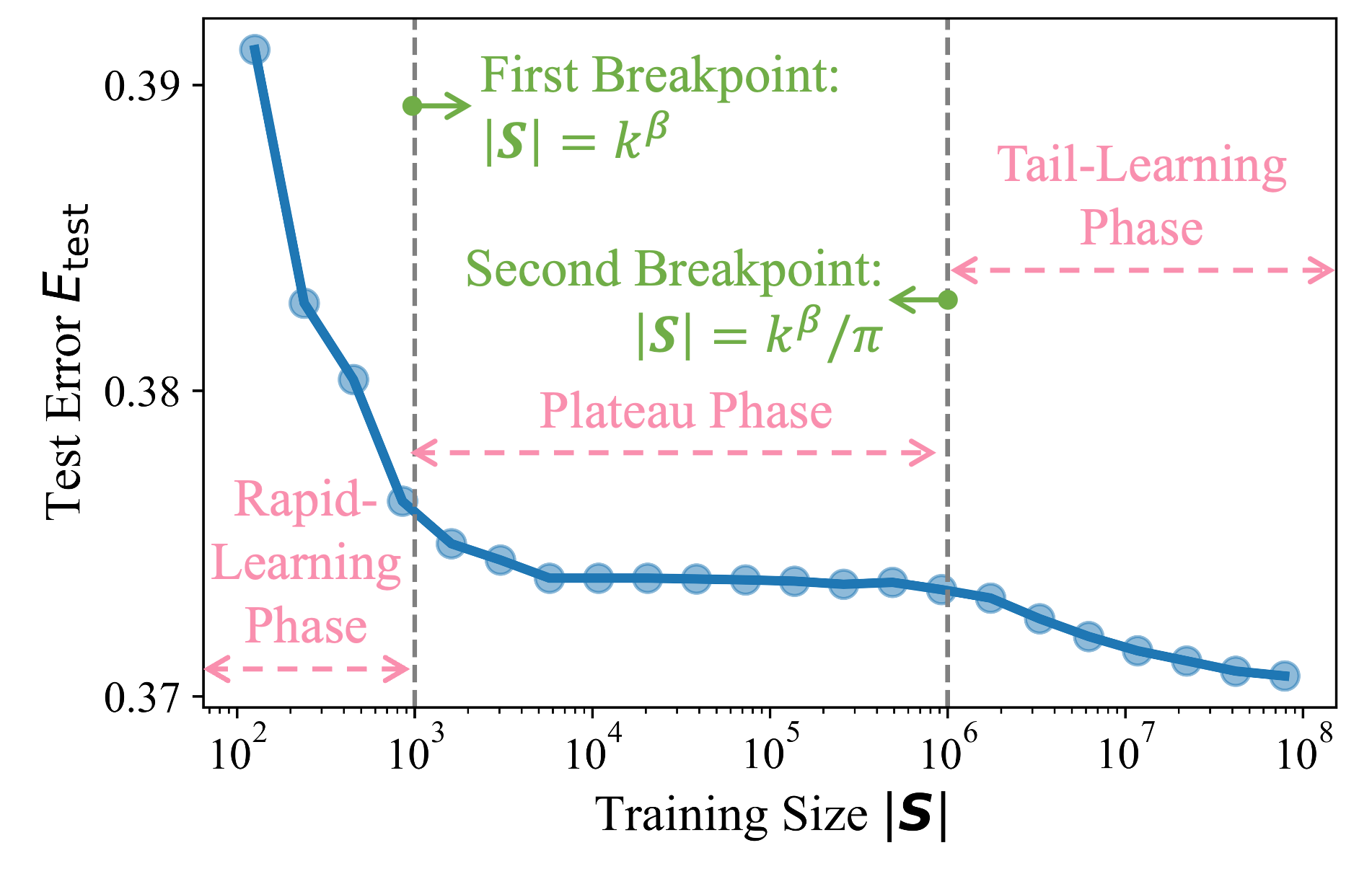}
\vspace{-2.2em}
\caption{Three-phase scaling behavior with two breakpoints on real–synthetic mixtures, for the same task as Figure~\ref{fig:motivation}.}
\label{fig:explain-lemma}
\vspace{-1.7em}
\end{wrapfigure}
\textbf{Remark \#1: }For frequently occurring (head) knowledge indexed by $1$ through $k$, the performance scaling exhibits a critical transition at sample size $|\bm{S}|=k^{\beta}$, corresponding to the first breakpoint in Figure~\ref{fig:explain-lemma}. 

\textbf{Remark \#2: }For infrequently occurring (tail) knowledge beyond rank $k$, the performance scaling exhibits a critical transition at sample size $|\bm{S}|=k^{\beta}/\pi$, corresponding to the second breakpoint Figure~\ref{fig:explain-lemma}. 

\textbf{Remark \#3: }This lemma highlights three phases of performance improvement as training size $|\bm{S}|$ grows. As shown in Figure~\ref{fig:explain-lemma}, initially in the rapid-learning phase, rapid performance gains occur predominantly due to extensive coverage and repeated sampling of head knowledge, supported by both real and synthetic data. As head knowledge becomes saturated, a plateau phase follows, characterized by minimal improvements. This stagnation arises because the model gains limited additional information from redundant head knowledge, and the data distribution has not yet yielded sufficient tail-class observations. Leveraging targeted data valuation strategies (as introduced in Section~\ref{sec:method}), one can efficiently identify and prioritize underrepresented knowledge, potentially improving training efficiency. Finally, in the tail-learning phase, the model’s performance significantly improves again as it learns from substantial accumulated real samples of tail knowledge.

\textbf{LLM Generalization Bound. }To provide a general theoretical understanding of LLMs trained on real–synthetic data mixtures, we derive a novel generalization bound with relaxed assumptions. Existing generalization bounds typically assume that all training data are drawn i.i.d. from a single distribution~\citep{lotfi2023non}. However, this assumption is overly simplistic for practical scenarios, as real-world datasets often supplement limited real datasets with synthetic data generated from large models. Our LLM generalization bound reflects a realistic and growing training regime in LLMs. It explicitly quantifies how empirical losses on training data of real-synthetic mixtures, the distributional discrepancies, the NTK, and data composition collectively influence the expected test loss.

To characterize the distribution discrepancy under the setting of real and synthetic mixtures, we introduce the $\mathcal H$-discrepancy $d_{\mathcal H}$ (Definition~\ref{def:discrepancy}) in Appendix~\ref{sec:proof}. To analyze the training dynamics of LLMs, we employ the NTK. Following the assumptions in~\citet{shu2022unifying}, we assume that the existence of a function class $\mathcal{H}$ such that for any $\bm{x}$, the deviation between the model $f(\bm{x}; \bm{\theta}) \in [0, 1]$ and the optimal hypothesis $f^*(\bm{x}; \bm{\theta})=\arg \min_f \left( \mathcal{L}_{\mathcal{D}_T}(f) + \mathcal{L}_{\mathcal{D}_S}(f) \right)$ is bounded by some $h \in \mathcal{H}$ with $h(x) \leq 1$. Our generalization bound is then derived based on both the NTK at initialization $\bm{\Theta}_0$ and at convergence $\bm{\Theta}_\infty$, and the distribution discrepancy between $\bm{S}_1$, $\bm{S}_2$, and $\bm{T}$:
\begin{lemgray}
\begin{restatable}[LLM Generalization Bound under Real and Synthetic Mixtures]{theorem}{errorBound}
\label{thm:bound}
Let $\lambda_{\min}(\cdot)$ and $\lambda_{\max}(\cdot)$ denote the minimum and maximum eigenvalue of a matrix. Assume $\lambda_{\min}(\bm{\Theta}_0) > 0$ and $||\nabla_{\bm{\theta}} f(\bm{x}; \bm{\theta}_0)||_2 \leq B$ for any $(\bm{x}, y) \in \bm{S}$ with $||\bm{x}||_2$, $y \in [0, 1]$. There exist $M \in \mathbb{N}$ such that for every $m > M$, when applying gradient descent with learning rate $\eta < \min \left\{ 2m^{-1} \left( \lambda_{\min}(\bm{\Theta}_\infty) + \lambda_{\max}(\bm{\Theta}_\infty) \right)^{-1}, |\bm{S}| / \lambda_{\max}(\bm{\Theta}_0) \right\}$, with probability at least $1 - 2\delta$,
\begin{equation}
\begin{aligned}
    \mathcal{L}_{D_T}(f) &\leq \pi \mathcal{L}_{\bm{S}_1}(f) + (1-\pi)\mathcal{L}_{\bm{S}_2}(f) + \pi d_\mathcal{H}(\bm{T}, \bm{S}_1) + (1-\pi)d_\mathcal{H}(\bm{T}, \bm{S}_2) \\
    & + 2B\sqrt{\frac{\hat{\bm{y}}^T\bm{\Theta}_0^{-1}\hat{\bm{y}}}{|\bm{S}|}} + \sqrt{\frac{2\max(\pi, 1-\pi)\log(8/\delta)}{|\bm{S}|}} + \varepsilon,
\end{aligned}
\end{equation}
where each element in $\hat{\bm{y}}$ is defined as $\hat{y} \triangleq y - f(\bm{x}; \bm{\theta}_0)$ and $\varepsilon \triangleq 2c / \sqrt{m} + 3 \sqrt{ \log(4 / \delta)/2|\bm{S}|}  + \sqrt{\log(4 / \delta)/2|\bm{T}|} + \mathcal{L}_{\mathcal{D}_T}(f^*) + \mathcal{L}_{\mathcal{D}_S}(f^*)$, and $c > 0$ is a constant.
\end{restatable}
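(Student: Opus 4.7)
The plan is to decompose $\mathcal{L}_{\mathcal{D}_T}(f)$ into three types of contributions — a weighted empirical-risk term, two distribution-discrepancy terms capturing the mismatch between the test distribution and each of the real and synthetic components, and a capacity/concentration term tied to the NTK training dynamics — then control each piece separately by combining domain-adaptation arguments in the style of the $\mathcal{H}$-discrepancy framework with an NTK-regime Rademacher bound in the spirit of Cao and Gu's neural network generalization analysis, stitched together by a union bound.

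First I would write $\mathcal{L}_{\mathcal{D}_T}(f) = \bigl[\mathcal{L}_{\mathcal{D}_T}(f)-\mathcal{L}_{\mathcal{D}_S}(f)\bigr] + \mathcal{L}_{\mathcal{D}_S}(f)$ and attack the bracketed shift first. Since $\mathcal{D}_S = \pi\mathcal{D} + (1-\pi)\mathcal{D}^{\prime}$, the shift splits additively into $\pi\bigl[\mathcal{L}_{\mathcal{D}_T}(f)-\mathcal{L}_{\mathcal{D}}(f)\bigr] + (1-\pi)\bigl[\mathcal{L}_{\mathcal{D}_T}(f)-\mathcal{L}_{\mathcal{D}^{\prime}}(f)\bigr]$. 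Adding and subtracting $\mathcal{L}_{\mathcal{D}_T}(f^*)$, $\mathcal{L}_{\mathcal{D}}(f^*)$ and $\mathcal{L}_{\mathcal{D}^{\prime}}(f^*)$ and invoking the $\mathcal{H}$-discrepancy of Definition~\ref{def:discrepancy} bounds each population shift by $d_{\mathcal{H}}(\mathcal{D}_T,\mathcal{D})$ and $d_{\mathcal{H}}(\mathcal{D}_T,\mathcal{D}^{\prime})$ plus the oracle residuals $\mathcal{L}_{\mathcal{D}_T}(f^*) + \mathcal{L}_{\mathcal{D}_S}(f^*)$, which are absorbed into $\varepsilon$. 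A uniform-convergence step then replaces the population discrepancies by the empirical $d_{\mathcal{H}}(\bm{T},\bm{S}_1)$ and $d_{\mathcal{H}}(\bm{T},\bm{S}_2)$, producing the $\sqrt{\log(4/\delta)/2|\bm{S}|}$ and $\sqrt{\log(4/\delta)/2|\bm{T}|}$ pieces of $\varepsilon$. For the remaining $\mathcal{L}_{\mathcal{D}_S}(f)$, note that $\pi\mathcal{L}_{\bm{S}_1}(f) + (1-\pi)\mathcal{L}_{\bm{S}_2}(f)$ is an unbiased estimator and apply Hoeffding's inequality on the mixture draws; the per-sample variance is driven by the heavier component, which is why the factor $\max(\pi,1-\pi)$ appears inside the $\sqrt{2\max(\pi,1-\pi)\log(8/\delta)/|\bm{S}|}$ term.

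To make the empirical-risk control hold uniformly over the hypothesis class reached by gradient descent, I would then invoke the NTK machinery: under the stated conditions on width $m$, step size $\eta$, and smallest eigenvalue $\lambda_{\min}(\bm{\Theta}_0)>0$, the network stays in its linearized regime, so the empirical Rademacher complexity of the realized function class is at most $B\sqrt{\hat{\bm{y}}^\top\bm{\Theta}_0^{-1}\hat{\bm{y}}/|\bm{S}|}$ up to an $O(1/\sqrt{m})$ lazy-training correction. Standard symmetrization yields the factor of two and the $2B\sqrt{\hat{\bm{y}}^\top\bm{\Theta}_0^{-1}\hat{\bm{y}}/|\bm{S}|}$ term, while the finite-width error contributes $2c/\sqrt{m}$ to $\varepsilon$. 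A union bound over the four high-probability events (the weighted Hoeffding step, the two empirical-discrepancy concentration steps, and the Rademacher symmetrization) allocates the budget as $\delta/2$ each to give the claimed $1-2\delta$.

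The hardest part will be two-fold. First, ensuring that the NTK Rademacher bound, which is traditionally stated for i.i.d.\ samples from one distribution, transfers cleanly to samples from the mixture $\mathcal{D}_S$; this reduces to observing that mixture draws are themselves i.i.d.\ under $\mathcal{D}_S$ but requires a careful rewrite of the NTK convergence lemma so that its conditions on $\eta$ and $m$ match the statement. Second, carefully bookkeeping the $\pi$ and $1-\pi$ weights through the domain-adaptation decomposition and the concentration allocations so that the coefficients in front of $d_{\mathcal{H}}(\bm{T},\bm{S}_1)$, $d_{\mathcal{H}}(\bm{T},\bm{S}_2)$, and the Hoeffding term come out exactly as stated rather than with conservative constants. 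The remaining manipulations — applying Hoeffding, bounding $|\hat{y}|\le 1$ for $y, f(\bm{x};\bm{\theta}_0)\in[0,1]$, and collecting the oracle residuals into $\varepsilon$ — are routine.
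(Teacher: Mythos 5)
Your overall architecture matches the paper's: a domain-adaptation decomposition via the $\mathcal{H}$-discrepancy, Hoeffding-based replacement of population discrepancies by their empirical counterparts, and the NTK-based lemma (in the style of DAVINZ / Shu et al.) to control $\mathcal{L}_{\mathcal{D}_S}(f)-\mathcal{L}_{\bm{S}}(f)$, yielding the $2B\sqrt{\hat{\bm{y}}^\top\bm{\Theta}_0^{-1}\hat{\bm{y}}/|\bm{S}|}$ term and the $2c/\sqrt{m}$ correction. Splitting the shift at the population level into $\pi\,d_{\mathcal{H}}(\mathcal{D}_T,\mathcal{D})+(1-\pi)\,d_{\mathcal{H}}(\mathcal{D}_T,\mathcal{D}')$ rather than keeping a single $d_{\mathcal{H}}(\mathcal{D}_S,\mathcal{D}_T)$ and splitting only the empirical sum (as the paper does) is an equivalent route. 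The one genuine error is your bookkeeping of the $\sqrt{2\max(\pi,1-\pi)\log(8/\delta)/|\bm{S}|}$ term: it does not come from a Hoeffding step on the empirical risk. The identity $\mathcal{L}_{\bm{S}}(f)=\pi\mathcal{L}_{\bm{S}_1}(f)+(1-\pi)\mathcal{L}_{\bm{S}_2}(f)$ is exact algebra given the sample counts, and no concentration is needed (nor would a naive Hoeffding on the loss of the trained, data-dependent $f$ be valid; the NTK lemma is what closes that gap, and it already contributes the $3\sqrt{\log(4/\delta)/2|\bm{S}|}$ piece of $\varepsilon$ that you attribute to the discrepancy step). The $\max(\pi,1-\pi)$ factor instead arises when the empirical mean of $h$ over $\bm{S}$ is split into its real and synthetic sub-sums (of sizes $\pi|\bm{S}|$ and $(1-\pi)|\bm{S}|$) while estimating the discrepancy; correcting this attribution, and dropping the redundant Hoeffding step on the empirical risk, makes your argument line up with the paper's.
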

\end{lemgray}

\textbf{Remark: }Theorem~\ref{thm:bound} shows that the generalization error on the test distribution is bounded in terms of four key factors: (1) the empirical loss on training real samples and training synthetic samples; (2) the distribution discrepancy between test data and train data of real and synthetic mixtures; (3) the NTK-related value at initialization; and (4) the composition of the training dataset, specifically the proportion $\pi$ of real data and the total number of samples $|\bm{S}|$.

\vspace{-1em}
\section{Method}\label{sec:method}
\vspace{-1em}

In this section, we introduce a data valuation framework designed for training settings involving real–synthetic data mixtures to solve \textbf{Q2}. Existing data valuation methods~\citep{lin2024deviation,fleckenstein2023review,DBLP:conf/aistats/WangJ23,DBLP:conf/aistats/Kwon022,DBLP:conf/nips/XuWFL21} typically require multiple retrainings or assume that all training data is drawn from a single distribution. These methods are not scalable to large models and, more importantly, do not explicitly consider the real-world data composed of real and synthetic mixtures. Our method is derived directly from the generalization bound in Section~\ref{sec:theory}, and is designed to estimate the contribution of data subsets (data contributors) under real-synthetic mixtures, while remaining retraining-free and thus scalable to LLMs.

Specifically, we realize the discrepancy $d_{\mathcal{H}}$ using multiple-kernel maximum mean discrepancy (MK-MMD~\citep{DBLP:conf/nips/GrettonSSSBPF12}) in reproducing kernel Hilbert spaces~\citep{DBLP:conf/icml/LongC0J15,DBLP:journals/corr/abs-1207-6076}, which captures a wide class of hypotheses while retaining computational efficiency. Moreover, the use of multiple kernels enables adaptive integration of features at different scales, which is well-suited for LLM training scenarios where real and synthetic data may differ significantly in linguistic style, topical coverage, or vocabulary distribution~\citep{gretton2012optimal}. In data valuation, we compare the relative performances of data contributors; the constant $\varepsilon$ in Theorem~\ref{thm:bound} is independent of the ranking of data contributors. We therefore ignore $\varepsilon$ while reducing computational cost. Given a training dataset $\bm{S} = \bm{S}_1 \cup \bm{S}_2$, where $\bm{S}_1 \sim \mathcal{D}$ (real data) and $\bm{S}_2 \sim \mathcal{D}^\prime$ (synthetic data), and a test distribution $\bm{T} \sim \mathcal{D}_T$, we define the data valuation score as (see Algorithm~\ref{alg:valuation} in Appendix~\ref{sec:pseudocode}):
\begin{equation}
\begin{aligned}
    v(\bm{S}) = & w_1 \left[\pi \mathcal{L}_{\bm{S}_1}(f) + (1-\pi)\mathcal{L}_{\bm{S}_2}(f)\right] +  w_2 \left[\pi \text{Dist}(\bm{T}, \bm{S}_1) + (1 - \pi) \text{Dist}(\bm{T}, \bm{S}_2)\right] \\
    & + w_3\sqrt{\frac{\hat{\bm{y}}^{\top}\bm{\Theta}_0^{-1}\hat{\bm{y}}}{|\bm{S}|}} + w_4\sqrt{\frac{\max(\pi, 1-\pi)}{|\bm{S}|}},
\end{aligned}
\label{eq:valfunc}
\end{equation}
where $\mathcal{L}_{\bm{S}_i}(f)$ denotes the empirical loss on real ($i=1$) or synthetic ($i=2$) data, $\text{Dist}$ is the MK-MMD metric~\citep{DBLP:conf/nips/GrettonSSSBPF12}, $f$ and $\bm{\Theta}_0$ are the model and empirical NTK at initialization.  $\hat{\bm{y}}$ is evaluated on dataset $\bm{S}$ following its definition in Theorem~\ref{thm:bound}. $\pi$ is the proportion of real data in the training set, $w_1, w_2, w_3, w_4$ balance the contribution of the four terms. 

The valuation function $v(\bm{S})$ in Eq.(\ref{eq:valfunc}) directly reflects the components in our theoretical generalization bound. Each component of the empirical losses, distribution discrepancies, and the NTK, corresponds to a measurable quantity that influences generalization performance. This translation from theory to scoring function is particularly suited for LLMs, where large-scale training makes retraining-based valuation infeasible. The valuation function also provides a practical handle on the three-phase scaling behavior in Section~\ref{sec:theory}. In the first phase, $v(\bm{S})$ highlights subsets from head classes that rapidly reduce the empirical losses. During the plateau phase, where head-class performance saturates, the NTK-based generalization term becomes critical, distinguishing data that meaningfully alters the function class from data that is redundant or uninformative. In the final phase, as tail classes begin to appear in real data, the function prioritizes examples that drive continued error reduction.
Notably, our scoring function $v(\bm{S})$ is designed to be directly applicable in LLM-scale settings, but it also supports integration with marginal-contribution-based valuation methods, see Appendix~\ref{sec:marginalDV} for details.

\vspace{-1em}
\section{Experiments}\label{sec:exps}
\vspace{-1em}
In this section, we evaluate the effectiveness of our data valuation method under datasets of real and synthetic mixtures. We conduct experiments across four representative tasks: image classification, sentiment classification, instruction following, and complex reasoning. As detailed in Section~\ref{sec:exp4validate_thm}, we first verify that the three-phase generalization behavior predicted by our theoretical analysis emerges in practice under a controlled long-tail setting. Section~\ref{sec:exp_effecive} compares our method against five recent data valuation baselines across all tasks and various backbones. Our method achieves higher correlation with ground-truth while maintaining significantly low computational cost. Finally, Section~\ref{sec:exp_stability} demonstrates that the relative values computed using our scoring function remain stable under subsampling, supporting the scalability of our framework for large-scale LLM tasks. Beyond the main results, we include an extended analysis of contributors' ranking visualization across data valuation methods in Appendix~\ref{sec:more-exp}.

\vspace{-1em}
\subsection{Experimental Setup}\label{sec:exp-setup}
\vspace{-0.5em}
\textbf{Tasks and Datasets.}
We consider the following four tasks: 
(1) \textit{Image Classification} is the task of assigning a label to a given image. We use the CIFAR-100 dataset~\citep{krizhevsky2009learning} as the real data, and generate synthetic data by applying corruption transformations from the CIFAR-100-C benchmark~\citep{DBLP:conf/iclr/HendrycksD19}.
(2) \textit{Sentiment Classification} is the task of determining the sentiment polarity (positive or negative) of a given text, such as a movie review. We use the IMDb~\citep{maas2011learning} as the real dataset and the FinGPT Sentiment Train dataset~\citep{yang2023fingpt} as synthetic data. 
(3) \textit{Instruction Following} involves generating an appropriate response or action based on a natural language instruction, testing a model's ability to comprehend and execute commands or answer questions accurately. 
We use the Natural-Instructions dataset~\citep{mishra2021natural} as the real dataset and the Magpie-Pro-1M dataset~\citep{xu2024magpie} as the synthetic dataset. 
(4) \textit{Complex Reasoning}, particularly in mathematical problem-solving, requires generating multi-step reasoning processes to arrive at a solution, often using a technique called chain-of-thought (CoT) reasoning, where the model breaks down a problem into intermediate steps before providing the final answer. We use the human-annotated portions of the NuminaMath-CoT training set~\citep{li2024numinamath} as real data and the synthetically generated portions as synthetic data. 

\textbf{Baselines.}
We compare against five representative baselines designed for efficient data valuation: \textit{DAVINZ}~\citep{wu2022davinz}, \textit{Deviation}~\citep{lin2024deviation}, \textit{LOGRA}~\citep{arxiv2024logra}, \textit{TracIn}~\citep{pruthi2020estimating}, and \textit{TRAK}~\citep{park2023trak}. These baselines are selected based on two criteria: (1) they do not require repeated model retraining, making them scalable to LLMs; and (2) they operate with access to checkpoints, gradients, and training/test data.

\textbf{Implementation Details.}
For all tasks, each method receives the same inputs: training data (real and synthetic), test data, model checkpoints, and access to model gradients. Due to the large-scale nature of LLM, we compute gradients for only 1\% of the training data when evaluating gradient-based baselines to reduce computational overhead and improve efficiency. We use ResNet-18 for image classification task. For sentiment classification, instruction following, and complex reasoning tasks, we consider four backbones, including Qwen2.5-0.5B, Qwen3-0.6B, Qwen3-1.7B, and Llama-3.2-1B-Instruct. We use the Pearson, Spearman, and Kendall correlations between the data valuation scores and the ground truth as evaluation metrics. Following prior work~\citep{wu2022davinz}, we use ground truth to refer to the test performance of models trained to convergence on different subsets of data. Specifically, the ground truth represents test accuracy for image classification and sentiment classification tasks, IFEval score for instruction following, and correctness for complex reasoning achieved by fully trained models, where each model is trained using data from different contributors. Further details about the experimental setups are provided in Appendix~\ref{sec:exp-detail}.

\begin{figure}[t]
\vspace{-1em}
    \centering
    \begin{minipage}[t]{0.485\textwidth}
    \centering
    \includegraphics[width=\textwidth]{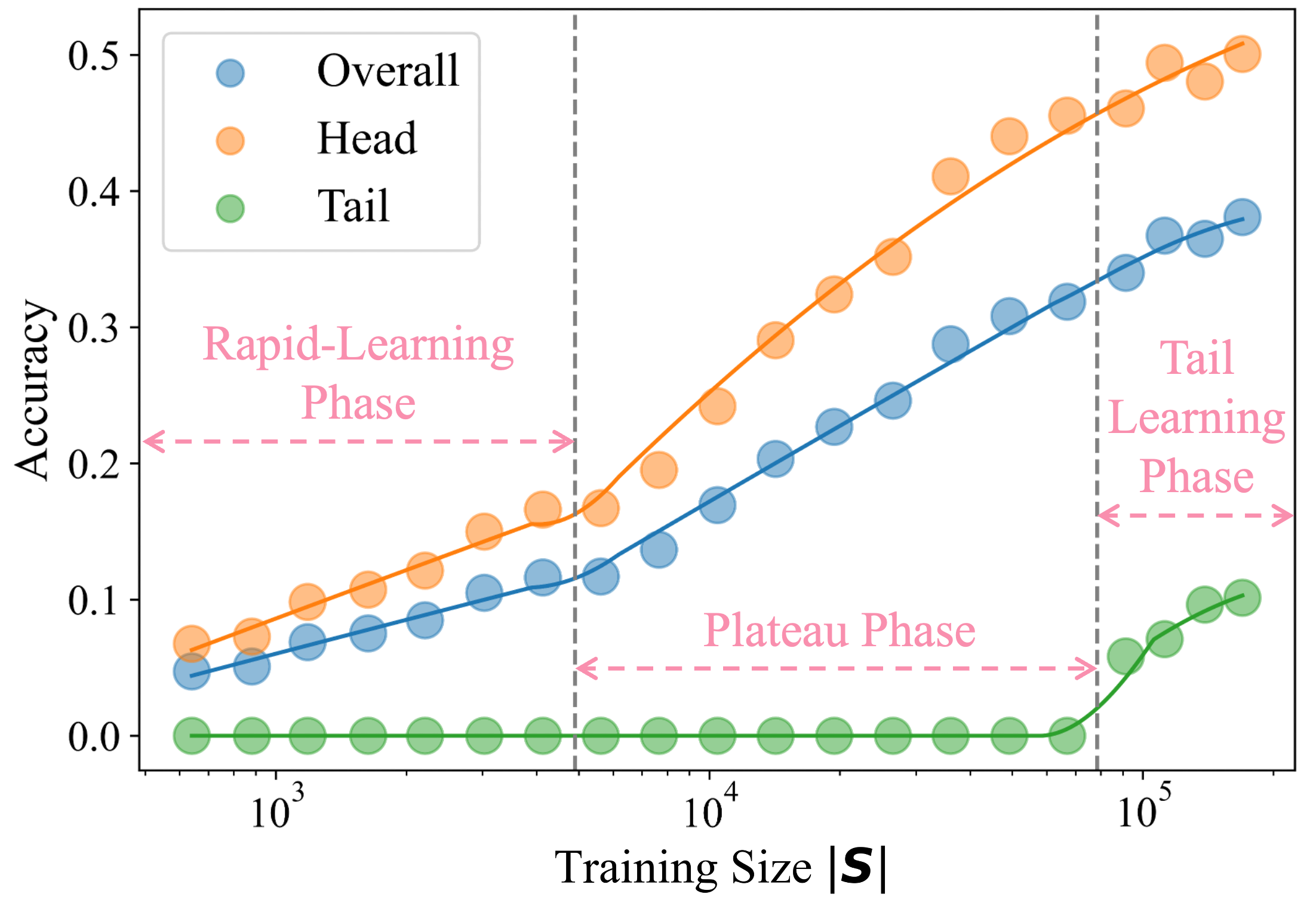}
    \vspace{-2em}
    \caption{Model accuracy as the increase of training size $|\bm{S}|$ on CIFAR-100, under a long-tail class distribution. Dashed grey lines mark predicted transition breakpoints at $|\bm{S}| = k^{\beta}$ (left) and $|\bm{S}| = k^{\beta}/\pi$ (right).}
    \label{fig:acc_vs_sample_size}
    \end{minipage}%
    \hfill
    \begin{minipage}[t]{0.485\textwidth}
        \centering
        \includegraphics[width=\textwidth]{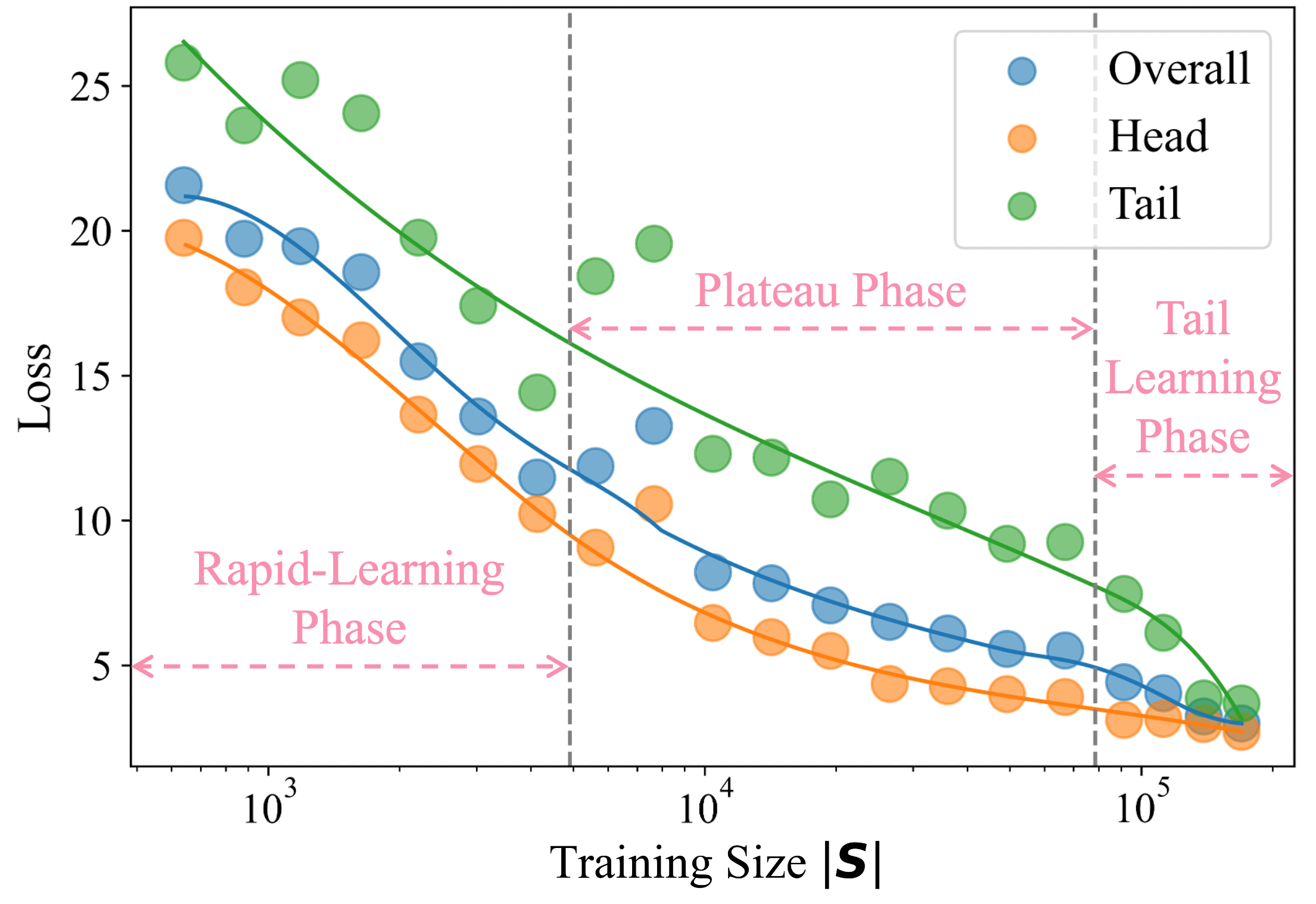}
        \vspace{-2em}
        \caption{Test loss as the increase of training size $|\bm{S}|$ on CIFAR-100, under a long-tail class distribution. Dashed grey lines mark predicted transition breakpoints at $|\bm{S}| = k^{\beta}$ (left) and $|\bm{S}| = k^{\beta}/\pi$ (right).}
        \label{fig:loss_vs_sample_size}
    \end{minipage}
    \vspace{3em}
    \begin{minipage}[t]{\textwidth}
    \centering
    \includegraphics[width=1.0\linewidth]{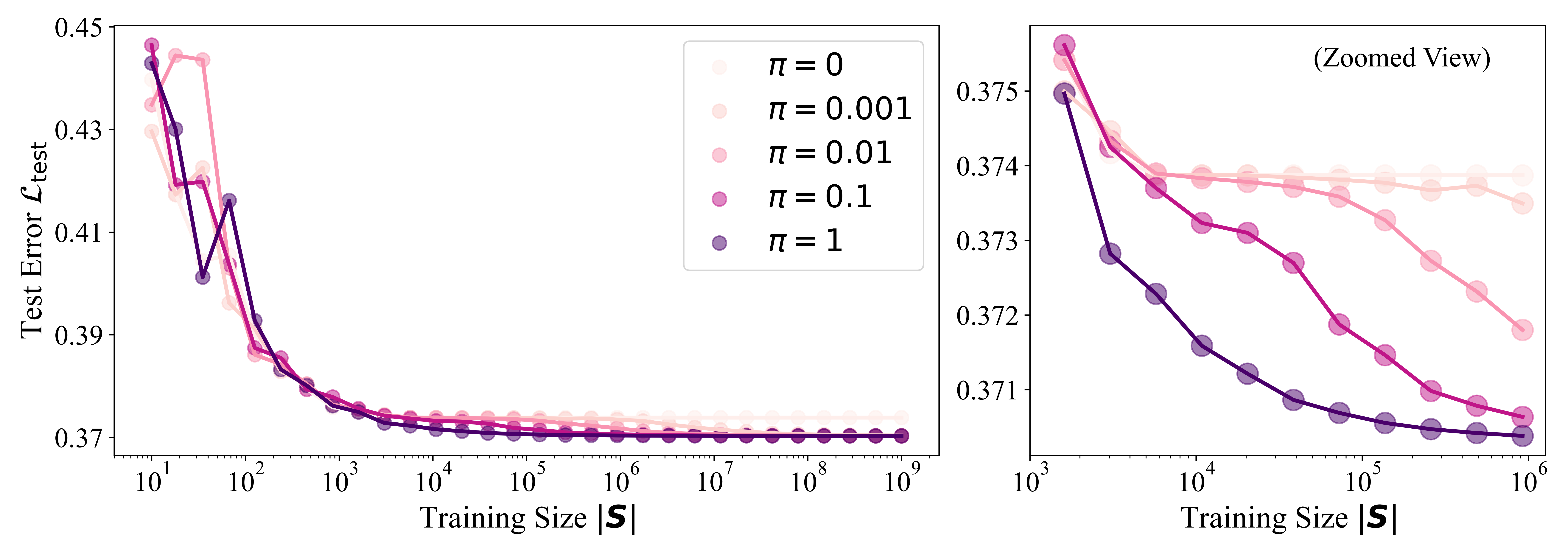}
    \vspace{-2em}
    \caption{Simulation of three-phase scaling behavior under real-synthetic mixtures. Each curve represents a different mixture ratios of real and synthetic data $\pi$. The right panel shows a zoomed view of the range $|\bm{S}| \in [10^3, 10^6]$.}
    \label{fig:3phase}
    \end{minipage}
\vspace{-4em}
\end{figure}

\vspace{-1em}
\subsection{Validating Theoretical Analysis}
\vspace{-1em}
\label{sec:exp4validate_thm}
To empirically validate our theoretical insights on the three-phase scaling behavior in Section~\ref{sec:theory}, we conduct experiments using CIFAR-100 as the real data and its corrupted variant (CIFAR-100-C) as the synthetic data. The proportion of real data is set to $\pi = 0.0625$, and we vary the total training sample size from $10^2$ to $10^6$. We treat each of the 100 classes as one knowledge. To simulate a long-tail distribution, we manually construct a class frequency with $p_i \propto i^{-2}$ and apply a tail cutoff at $k=70$. The model backbone is ResNet-18. We evaluate the test performance on a balanced test set with $10{,}000$ samples, measuring both accuracy and loss separately for the overall classes, head classes ($i \leq 70$), and tail classes ($i > 70$). 
Figures~\ref{fig:acc_vs_sample_size} and~\ref{fig:loss_vs_sample_size} plot the model's accuracy and test loss, respectively, as the increase of training sample size $|\bm{S}|$. The results exhibit a three-phase behavior consistent with our theoretical predictions: Phase 1 (rapid-learning), we observe a sharp decrease in head-class loss, indicating that the model quickly learns head knowledge from both real and synthetic data. Phase 2 (plateau), the overall loss reduction slows down, reflecting diminishing returns from saturated head information. Phase 3 (tail-learning), tail-class accuracy improves and loss drops rapidly, as the model learns tail knowledge from the increased number of real data.

In addition, we further validate our theoretical results with respect to diverse mixture ratios $\pi$ ranging from 0 to 1. The knowledge follows a long-tail distribution with $\beta = 1.5$ and tail cutoff $k = 100$. The model predicts a knowledge $i$ correctly with probability $\rho(i) = i^{-0.5}$ if observed and $\gamma(i) = i^{-1}$ if unobserved. Figure~\ref{fig:3phase} demonstrates that the three-phase scaling behavior holds consistently across different mixture ratios of real and synthetic data.

\begin{figure}[t]
\vspace{-1em}
    \centering
    \begin{minipage}[t]{0.485\textwidth}
    \centering
    \includegraphics[width=\textwidth]{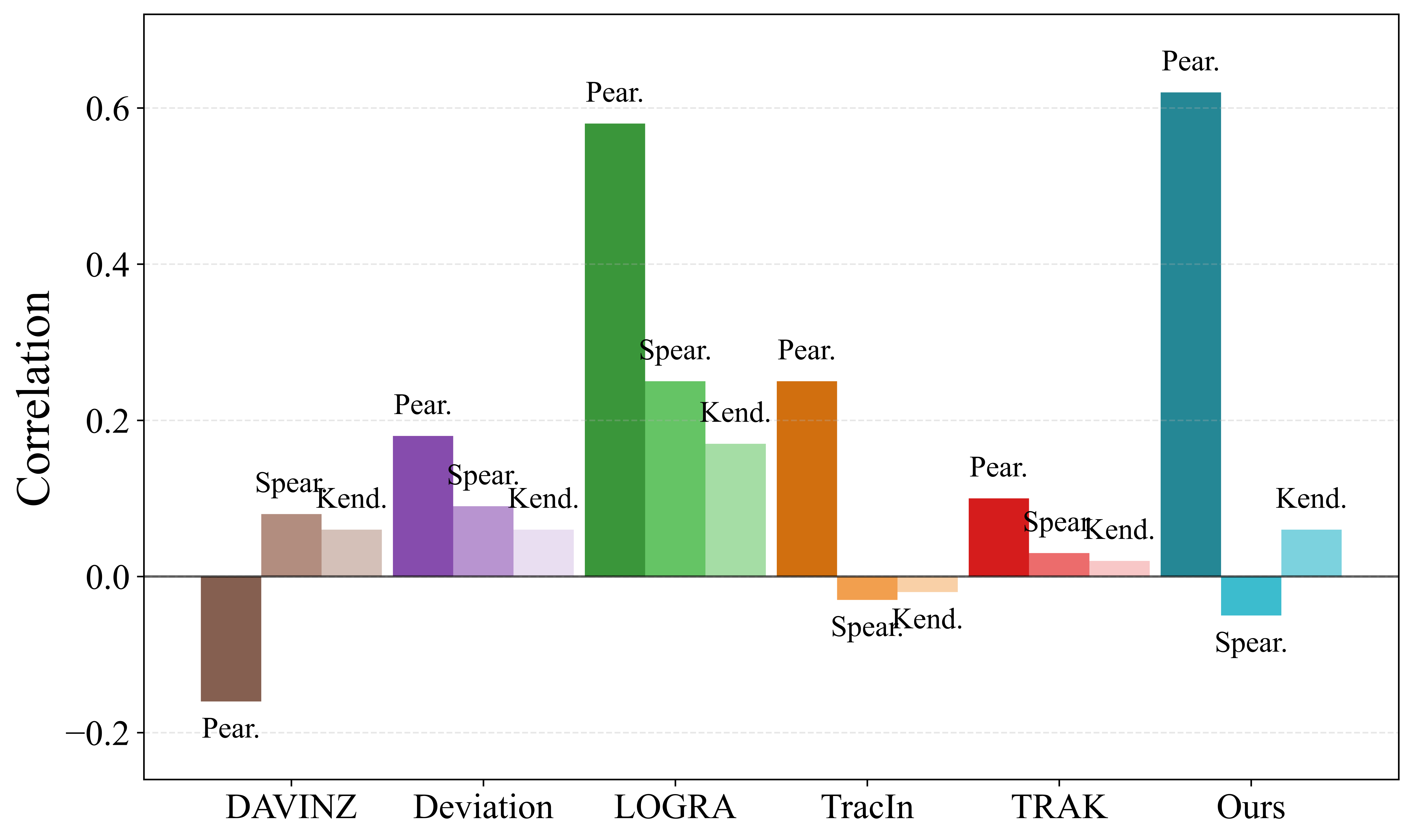}
    \vspace{-1em}
    \caption{Comparison of data valuation methods on the image classification task. We report Pearson, Spearman, and Kendall correlations, with higher values indicating better performance.}
    \label{fig:valuation_image}
    \end{minipage}%
    \hfill
    \begin{minipage}[t]{0.485\textwidth}
    \centering
    \includegraphics[width=\linewidth]{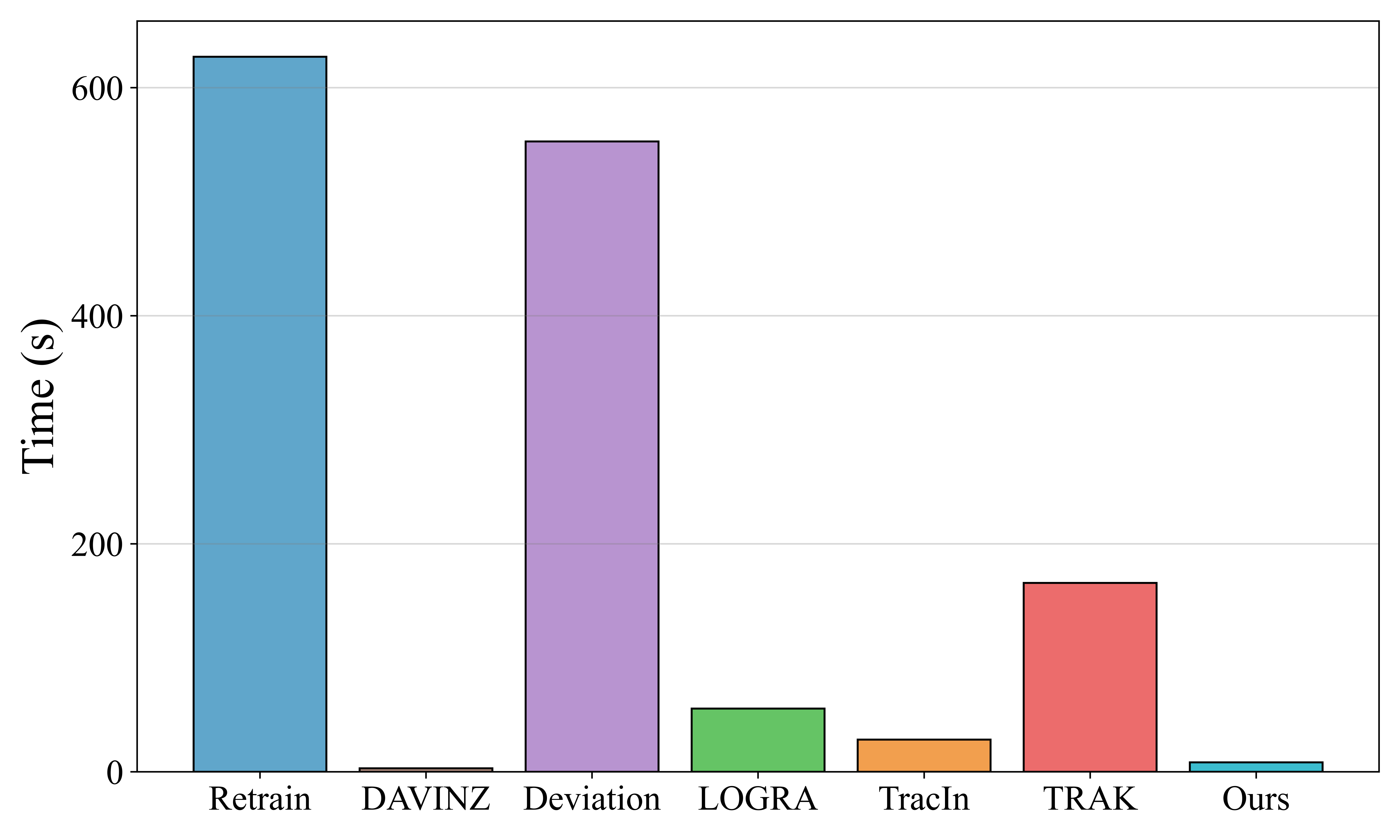}
    \vspace{-1em}
    \captionof{figure}{Runtime (in seconds) of data valuation methods on the image classification task. The reported values represent the average time for all data contributors.}
    \label{fig:time}
    \end{minipage}
\vspace{-1em}
\end{figure}

\begin{table}[t]
\centering
\caption{Comparison of data valuation methods across three tasks: sentiment classification, instruction following, and complex reasoning. For each task, we report the Pearson, Spearman, and Kendall correlations, where higher is better ($\uparrow$). The best results are shown in bold.}
\label{tab:valuation_llm}
\scalebox{0.84}{
\begin{tabular}{p{3.2cm}lccccccccc}
\toprule
Backbone & Method
& \multicolumn{3}{c}{Sentiment}
& \multicolumn{3}{c}{Instruction}
& \multicolumn{3}{c}{Reasoning} \\
\cmidrule(lr){3-5} \cmidrule(lr){6-8} \cmidrule(lr){9-11}
& & Pear. & Spear. & Kend.
  & Pear. & Spear. & Kend.
  & Pear. & Spear. & Kend. \\
\midrule
\multirow{6}{3.2cm}{Qwen2.5-0.5B}
& DAVINZ    & -0.46 & -0.42 & -0.33 & -0.16 & -0.40 & -0.33 & -0.00 & -0.02 & -0.01 \\
& Deviation &  0.63 &  0.76 &  0.56 &  0.05 & -0.20 &  0.00 & -0.03 &  0.00 & -0.00 \\
& LOGRA     & -0.64 & -0.79 & -0.60 & -0.09 &  0.20 &  0.00 &  0.08 &  0.08 &  0.05 \\
& TracIn    & -0.68 & -0.81 & -0.64 & -0.94 & -1.00 & -1.00 & -0.11 & -0.12 & -0.09 \\
& TRAK      &  0.43 &  0.36 &  0.29 & -0.01 &  0.20 &  0.00 & -0.15 & -0.14 & -0.10 \\
& Ours      & \textbf{0.70} & \textbf{0.87} & \textbf{0.64} & \textbf{1.00} & \textbf{1.00} & \textbf{1.00} & \textbf{0.11} & \textbf{0.14} & \textbf{0.10} \\
\midrule
\multirow{6}{3.2cm}{Qwen3-0.6B} 
& DAVINZ    & -0.67 & -0.71 & -0.49 &  0.88 &  0.80 &  0.67 &  0.04 &  0.06 &  0.04 \\
& Deviation &  0.32 &  0.20 &  0.13 & -0.80 & -0.80 & -0.67 &  0.14 &  0.15 &  0.10 \\
& LOGRA     & -0.62 & -0.77 & -0.58 & -0.87 & -0.80 & -0.67 & -0.07 & -0.04 & -0.03 \\
& TracIn    & -0.69 & -0.66 & -0.49 & -0.87 & -0.80 & -0.67 &  0.05 &  0.06 &  0.04 \\
& TRAK      &  0.64 &  \textbf{0.73} &  0.54 & -0.91 & -0.80 & -0.67 & -0.01 & -0.02 & -0.02 \\
& Ours      &  \textbf{0.86} &  0.71 &  \textbf{0.63} &  \textbf{1.00} &  \textbf{1.00} &  \textbf{1.00} &  \textbf{0.25} &  \textbf{0.26} &  \textbf{0.18} \\
\midrule
\multirow{6}{3.2cm}{Qwen3-1.7B} 
& DAVINZ    & -0.41 & -0.32 & -0.33 &  0.66 &  0.20 &  0.00 &  0.43 &  \textbf{0.41} &  \textbf{0.30} \\
& Deviation & -0.23 & -0.52 & -0.42 & -0.87 & -0.80 & -0.67 & -0.10 & -0.09 & -0.07 \\
& LOGRA     &  0.19 &  0.24 &  0.11 & -0.60 & -0.20 &  0.00 & -0.41 & -0.40 & -0.30 \\
& TracIn    & -0.03 &  0.02 &  0.07 & -0.63 & -0.60 & -0.33 &  0.22 &  0.34 &  0.24 \\
& TRAK      &  0.40 &  0.32 &  0.33 & -0.60 & -0.20 &  0.00 & -0.17 & -0.18 & -0.13 \\
& Ours      &  \textbf{0.70} &  \textbf{0.81} &  \textbf{0.69} &  \textbf{1.00} &  \textbf{1.00} &  \textbf{1.00} &  \textbf{0.44} &  \textbf{0.41} &  \textbf{0.30} \\
\midrule
\multirow{6}{3.2cm}{Llama-3.2-1B-Instruct} 
& DAVINZ    &  0.70 &  0.62 &  0.45 &  0.14 &  \textbf{0.80} &  \textbf{0.67} & -0.11 & -0.06 & -0.04 \\
& Deviation &  0.59 &  0.79 &  0.58 & -0.05 & -0.40 & -0.33 &  0.16 &  0.15 &  0.10 \\
& LOGRA     &  0.51 &  0.65 &  0.49 & -0.07 & -0.80 & -0.67 & -0.06 &  0.06 &  0.05 \\
& TracIn    &  0.63 &  0.45 &  0.36 &  \textbf{0.27} &  0.00 &  0.00 & -0.04 & -0.18 & -0.12 \\
& TRAK      & -0.53 & -0.38 & -0.27 & -0.25 & -0.80 & -0.67 & -0.01 & -0.03 & -0.02 \\
& Ours      &  \textbf{0.96} &  \textbf{0.84} &  \textbf{0.72} & -0.21 & -0.80 & -0.67 &  \textbf{0.24} &  \textbf{0.27} &  \textbf{0.19} \\
\bottomrule
\end{tabular}
}
\vspace{-2em}
\end{table}

\vspace{-1em}
\subsection{Effective and Efficient Data Valuation}
\vspace{-0.5em}
\label{sec:exp_effecive}
We compare our method against five recent baselines across the four tasks, including image classification, sentiment classification, instruction following, and complex reasoning.
Effectiveness is measured by correlations between data valuation scores and ground-truth including Pearson, Spearman, and Kendall correlations, and efficiency is assessed by runtime.

From Figure~\ref{fig:valuation_image}, Figure~\ref{fig:time}, and Table~\ref{tab:valuation_llm}, we have the following observations: (1) Across most tasks and backbones, our method achieves the highest correlation scores with ground-truth performance, and these gains are consistent across correlation measures, demonstrating its effectiveness in identifying valuable data contributors. In particular, on the sentiment task with Qwen3-1.7B, our approach attains a Spearman correlation of 0.81, significantly exceeding the second-best method of 0.32. (2) In addition to its effectiveness, our method incurs a low runtime, requiring only 8 seconds on average. This is significantly faster than Retrain (627 seconds), Deviation (553 seconds), and TRAK (166 seconds), highlighting better computational efficiency.

\begin{table}[t]
\vspace{-1em}
\centering
\caption{Relative scores of our method's NTK and MMD components of selected data contributors under different training sizes. Scores are min-max normalized across contributors within each subsampling size to highlight relative rankings.}
\vspace{-0.5em}
\label{tab:stability}
\begin{tabular}{c cc cc cc cc cc}
\toprule
\multirow{2}{*}{Size} 
  & \multicolumn{2}{c}{Contributor 1}
  & \multicolumn{2}{c}{Contributor 2}
  & \multicolumn{2}{c}{Contributor 3}
  & \multicolumn{2}{c}{Contributor 4}
  & \multicolumn{2}{c}{Contributor 5}\\
\cmidrule(lr){2-3} \cmidrule(lr){4-5}
\cmidrule(lr){6-7} \cmidrule(lr){8-9} \cmidrule(lr){10-11}
 & MMD & NTK & MMD & NTK & MMD & NTK & MMD & NTK & MMD & NTK \\
\midrule
100  & 0.00 & 0.00 & 0.18 & 0.55 & 0.00 & 0.33 & 0.38 & 0.96 & 1.00 & 1.00 \\
400  & 0.00 & 0.00 & 0.17 & 0.53 & 0.03 & 0.36 & 0.20 & 0.74 & 1.00 & 1.00 \\
1{,}000 & 0.00 & 0.00 & 0.15 & 0.66 & 0.03 & 0.54 & 0.32 & 0.91 & 1.00 & 1.00 \\
4{,}000 & 0.00 & 0.00 & 0.23 & 0.65 & 0.06 & 0.53 & 0.29 & 0.92 & 1.00 & 1.00 \\
\bottomrule
\end{tabular}
\vspace{-1em}
\end{table}

\vspace{-1em}
\subsection{Stability of Relative Valuation Under Subsampling}
\vspace{-0.5em}
\label{sec:exp_stability}
To examine the stability of our method under subsampled training sets, we analyze whether MMD score $\pi \text{Dist}(\bm{T}, \bm{S}_1) + (1 - \pi) \text{Dist}(\bm{T}, \bm{S}_2)$ and NTK score $\sqrt{\hat{\bm{y}}^{\top}\bm{\Theta}_0^{-1}\hat{\bm{y}}/|\bm{S}|}$ in Eq.(\ref{eq:valfunc}) remain stable when computed on a small fraction of the data. Specifically, we conduct an image classification task with the first five contributors, and compute their scores using training subsets of size 100, 400, 1{,}000, and 4{,}000. For comparability, we apply min-max normalization to the scores within each subsampling size, focusing on the relative rankings rather than absolute values.

As shown in Table~\ref{tab:stability}, both MMD and NTK scores maintain stability in their relative contributor rankings across different subsample sizes. For example, contributor 1 consistently receives the lowest normalized MMD and NTK scores, while contributor 5 consistently receives the highest score, regardless of training size. This suggests that the relative quality of contributors given by our method is consistent across diverse subsampling sizes.

\vspace{-1em}
\section{Related Work}\label{sec:relatedwork}
\vspace{-1em}
\textbf{Data Valuation. }Data valuation methods quantify the contribution or importance of individual data subsets of a dataset to the performance of machine learning models. Traditional retraining-based approaches, such as LOO~\citep{DBLP:conf/nips/KohATL19,koh2017understanding}, SV-based methods~\citep{ghorbani2019data}, and downsampling~\cite {wang2021data}, require extensive computation due to model retraining, making them infeasible for LLMs. Recently, gradient-based methods emerged as efficient alternatives, leveraging model gradients and checkpoints for data valuation. TracIn~\citep{pruthi2020estimating} specifically traces the gradient descent path of training, estimating influence based on gradient similarity across training checkpoints. TRAK~\citep{park2023trak} approximates the influence using kernel methods derived from gradients and efficient random projections, scaling effectively to large-scale models and datasets. DAVINZ~\citep{wu2022davinz} leverages the NTK to estimate data valuation directly from initialization gradients, enabling a training-free evaluation. LOGRA~\citep{arxiv2024logra} introduces a label-only gradient attribution approach, estimating data valuation by analyzing gradient alignment without relying on explicit labels. Despite these advances, current methods still face significant limitations when dealing with datasets composed of real and synthetic data.

\textbf{LLM Model Collapse. }LLMs trained with increasing amounts of synthetic data have been observed to suffer from model collapse, a phenomenon where model performance degrades over training~\citep{DBLP:journals/nature/ShumailovSZPAG24, DBLP:journals/corr/abs-2410-04840}. One key cause is synthetic data often exhibits reduced diversity and redundancy in knowledge compared to real data, especially when generated from earlier versions of the same model~\citep{havrilla2024surveying,chen2024diversity}. As synthetic data are reused or recursively generated, the information content becomes increasingly narrow and biased, resulting in amplified errors~\citep{shumailov2023curse, zhang2024regurgitative}. These issues motivate a principled understanding of the LLM training behaviors on datasets of real and synthetic mixtures.

\vspace{-1em}
\section{Conclusion}\label{sec:conclusion}
\vspace{-1em}
LLMs trained on datasets composed of real and synthetic mixtures exhibit complex scaling behaviors. In this work, we identify a fine-grained three-phase scaling behavior with two breakpoints, reflecting transitions in the model’s ability to acquire head and tail knowledge. We further derive a general LLM generalization bound to reveal key factors that influence the performance of LLMs. Building on this theoretical bound, we develop a practical data valuation method that estimates the contribution of individual data subsets. Empirical results on four diverse tasks show that our method achieves higher correlation with ground-truth than baseline methods, while remaining computationally efficient at LLM-scale tasks. 

\bibliographystyle{unsrtnat}
\bibliography{ref}

@inproceedings{DBLP:conf/icml/DohmatobFYCK24,
  author       = {Elvis Dohmatob and
                  Yunzhen Feng and
                  Pu Yang and
                  Fran{\c{c}}ois Charton and
                  Julia Kempe},
  title        = {A Tale of Tails: Model Collapse as a Change of Scaling Laws},
  booktitle    = {Forty-first International Conference on Machine Learning, {ICML} 2024,
                  Vienna, Austria, July 21-27, 2024},
  publisher    = {OpenReview.net},
  year         = {2024},
  url          = {https://openreview.net/forum?id=KVvku47shW},
  timestamp    = {Mon, 02 Sep 2024 16:55:26 +0200},
  biburl       = {https://dblp.org/rec/conf/icml/DohmatobFYCK24.bib},
  bibsource    = {dblp computer science bibliography, https://dblp.org}
}

@article{DBLP:journals/corr/abs-2410-04840,
  author       = {Elvis Dohmatob and
                  Yunzhen Feng and
                  Arjun Subramonian and
                  Julia Kempe},
  title        = {Strong Model Collapse},
  journal      = {CoRR},
  volume       = {abs/2410.04840},
  year         = {2024},
  url          = {https://doi.org/10.48550/arXiv.2410.04840},
  doi          = {10.48550/ARXIV.2410.04840},
  eprinttype    = {arXiv},
  eprint       = {2410.04840},
  timestamp    = {Tue, 12 Nov 2024 18:39:25 +0100},
  biburl       = {https://dblp.org/rec/journals/corr/abs-2410-04840.bib},
  bibsource    = {dblp computer science bibliography, https://dblp.org}
}

@inproceedings{DBLP:conf/nips/DohmatobFK24,
  author       = {Elvis Dohmatob and
                  Yunzhen Feng and
                  Julia Kempe},
  title        = {Model Collapse Demystified: The Case of Regression},
  booktitle    = {Advances in Neural Information Processing Systems 38: Annual Conference
                  on Neural Information Processing Systems 2024, NeurIPS 2024, Vancouver,
                  BC, Canada, December 10 - 15, 2024},
  year         = {2024},
  url          = {http://papers.nips.cc/paper\_files/paper/2024/hash/53dbd7e34fab703a639964e2d3ee9e84-Abstract-Conference.html},
  timestamp    = {Thu, 13 Feb 2025 16:56:43 +0100},
  biburl       = {https://dblp.org/rec/conf/nips/DohmatobFK24.bib},
  bibsource    = {dblp computer science bibliography, https://dblp.org}
}

@inproceedings{DBLP:conf/nips/JainMS24,
  author       = {Ayush Jain and
                  Andrea Montanari and
                  Eren Sasoglu},
  title        = {Scaling laws for learning with real and surrogate data},
  booktitle    = {Advances in Neural Information Processing Systems 38: Annual Conference
                  on Neural Information Processing Systems 2024, NeurIPS 2024, Vancouver,
                  BC, Canada, December 10 - 15, 2024},
  year         = {2024},
  url          = {http://papers.nips.cc/paper\_files/paper/2024/hash/c7038ec7eee8dc5a99d74d594f70aa3f-Abstract-Conference.html},
  timestamp    = {Thu, 13 Feb 2025 16:56:44 +0100},
  biburl       = {https://dblp.org/rec/conf/nips/JainMS24.bib},
  bibsource    = {dblp computer science bibliography, https://dblp.org}
}

@article{DBLP:journals/nature/ShumailovSZPAG24,
  author       = {Ilia Shumailov and
                  Zakhar Shumaylov and
                  Yiren Zhao and
                  Nicolas Papernot and
                  Ross J. Anderson and
                  Yarin Gal},
  title        = {{AI} models collapse when trained on recursively generated data},
  journal      = {Nat.},
  volume       = {631},
  number       = {8022},
  pages        = {755--759},
  year         = {2024},
  url          = {https://doi.org/10.1038/s41586-024-07566-y},
  doi          = {10.1038/S41586-024-07566-Y},
  timestamp    = {Sun, 19 Jan 2025 14:46:02 +0100},
  biburl       = {https://dblp.org/rec/journals/nature/ShumailovSZPAG24.bib},
  bibsource    = {dblp computer science bibliography, https://dblp.org}
}

@article{feng2024beyond,
  title={Beyond Model Collapse: Scaling Up with Synthesized Data Requires Verification},
  author={Feng, Yunzhen and Dohmatob, Elvis and Yang, Pu and Charton, Francois and Kempe, Julia},
  journal={arXiv preprint arXiv:2406.07515},
  year={2024}
}

@inproceedings{DBLP:conf/aistats/Kwon022,
  author       = {Yongchan Kwon and
                  James Zou},
  title        = {Beta Shapley: a Unified and Noise-reduced Data Valuation Framework
                  for Machine Learning},
  booktitle    = {International Conference on Artificial Intelligence and Statistics,
                  {AISTATS} 2022, 28-30 March 2022, Virtual Event},
  series       = {Proceedings of Machine Learning Research},
  volume       = {151},
  pages        = {8780--8802},
  publisher    = {{PMLR}},
  year         = {2022},
  url          = {https://proceedings.mlr.press/v151/kwon22a.html},
  timestamp    = {Sat, 30 Sep 2023 09:34:08 +0200},
  biburl       = {https://dblp.org/rec/conf/aistats/Kwon022.bib},
  bibsource    = {dblp computer science bibliography, https://dblp.org}
}

@inproceedings{DBLP:conf/aistats/WangJ23,
  author       = {Jiachen T. Wang and
                  Ruoxi Jia},
  title        = {Data Banzhaf: {A} Robust Data Valuation Framework for Machine Learning},
  booktitle    = {International Conference on Artificial Intelligence and Statistics,
                  25-27 April 2023, Palau de Congressos, Valencia, Spain},
  series       = {Proceedings of Machine Learning Research},
  volume       = {206},
  pages        = {6388--6421},
  publisher    = {{PMLR}},
  year         = {2023},
  url          = {https://proceedings.mlr.press/v206/wang23e.html},
  timestamp    = {Wed, 20 Mar 2024 11:34:12 +0100},
  biburl       = {https://dblp.org/rec/conf/aistats/WangJ23.bib},
  bibsource    = {dblp computer science bibliography, https://dblp.org}
}

@article{arxiv2024logra,
  author       = {Sang Keun Choe and
                  Hwijeen Ahn and
                  Juhan Bae and
                  Kewen Zhao and
                  Minsoo Kang and
                  Youngseog Chung and
                  Adithya Pratapa and
                  Willie Neiswanger and
                  Emma Strubell and
                  Teruko Mitamura and
                  Jeff G. Schneider and
                  Eduard H. Hovy and
                  Roger B. Grosse and
                  Eric P. Xing},
  title        = {What is Your Data Worth to GPT? LLM-Scale Data Valuation with Influence
                  Functions},
  journal      = {CoRR},
  volume       = {abs/2405.13954},
  year         = {2024},
  url          = {https://doi.org/10.48550/arXiv.2405.13954},
  doi          = {10.48550/ARXIV.2405.13954},
  eprinttype    = {arXiv},
  eprint       = {2405.13954},
  timestamp    = {Wed, 19 Jun 2024 08:52:47 +0200},
  biburl       = {https://dblp.org/rec/journals/corr/abs-2405-13954.bib},
  bibsource    = {dblp computer science bibliography, https://dblp.org}
}

@article{fleckenstein2023review,
  title={A review of data valuation approaches and building and scoring a data valuation model},
  author={Fleckenstein, Mike and Obaidi, Ali and Tryfona, Nektaria},
  journal={Harvard Data Science Review},
  volume={5},
  number={1},
  year={2023},
  publisher={The MIT Press}
}

@inproceedings{DBLP:conf/nips/XuWFL21,
  author       = {Xinyi Xu and
                  Zhaoxuan Wu and
                  Chuan Sheng Foo and
                  Bryan Kian Hsiang Low},
  title        = {Validation Free and Replication Robust Volume-based Data Valuation},
  booktitle    = {Advances in Neural Information Processing Systems 34: Annual Conference
                  on Neural Information Processing Systems 2021, NeurIPS 2021, December
                  6-14, 2021, virtual},
  pages        = {10837--10848},
  year         = {2021},
  url          = {https://proceedings.neurips.cc/paper/2021/hash/59a3adea76fadcb6dd9e54c96fc155d1-Abstract.html},
  timestamp    = {Tue, 03 May 2022 16:20:47 +0200},
  biburl       = {https://dblp.org/rec/conf/nips/XuWFL21.bib},
  bibsource    = {dblp computer science bibliography, https://dblp.org}
}

@article{brown2020language,
  author       = {Tom B. Brown and
                  Benjamin Mann and
                  Nick Ryder and
                  Melanie Subbiah and
                  Jared Kaplan and
                  Prafulla Dhariwal and
                  Arvind Neelakantan and
                  Pranav Shyam and
                  Girish Sastry and
                  Amanda Askell and
                  Sandhini Agarwal and
                  Ariel Herbert{-}Voss and
                  Gretchen Krueger and
                  Tom Henighan and
                  Rewon Child and
                  Aditya Ramesh and
                  Daniel M. Ziegler and
                  Jeffrey Wu and
                  Clemens Winter and
                  Christopher Hesse and
                  Mark Chen and
                  Eric Sigler and
                  Mateusz Litwin and
                  Scott Gray and
                  Benjamin Chess and
                  Jack Clark and
                  Christopher Berner and
                  Sam McCandlish and
                  Alec Radford and
                  Ilya Sutskever and
                  Dario Amodei},
  editor       = {Hugo Larochelle and
                  Marc'Aurelio Ranzato and
                  Raia Hadsell and
                  Maria{-}Florina Balcan and
                  Hsuan{-}Tien Lin},
  title        = {Language Models are Few-Shot Learners},
  booktitle    = {Advances in Neural Information Processing Systems 33: Annual Conference
                  on Neural Information Processing Systems 2020, NeurIPS 2020, December
                  6-12, 2020, virtual},
  year         = {2020},
  url          = {https://proceedings.neurips.cc/paper/2020/hash/1457c0d6bfcb4967418bfb8ac142f64a-Abstract.html},
  timestamp    = {Thu, 25 May 2023 10:38:31 +0200},
  biburl       = {https://dblp.org/rec/conf/nips/BrownMRSKDNSSAA20.bib},
  bibsource    = {dblp computer science bibliography, https://dblp.org}
}

@article{hoffmann2022training,
  author       = {Jordan Hoffmann and
                  Sebastian Borgeaud and
                  Arthur Mensch and
                  Elena Buchatskaya and
                  Trevor Cai and
                  Eliza Rutherford and
                  Diego de Las Casas and
                  Lisa Anne Hendricks and
                  Johannes Welbl and
                  Aidan Clark and
                  Tom Hennigan and
                  Eric Noland and
                  Katie Millican and
                  George van den Driessche and
                  Bogdan Damoc and
                  Aurelia Guy and
                  Simon Osindero and
                  Karen Simonyan and
                  Erich Elsen and
                  Jack W. Rae and
                  Oriol Vinyals and
                  Laurent Sifre},
  title        = {Training Compute-Optimal Large Language Models},
  journal      = {CoRR},
  volume       = {abs/2203.15556},
  year         = {2022},
  url          = {https://doi.org/10.48550/arXiv.2203.15556},
  doi          = {10.48550/ARXIV.2203.15556},
  eprinttype    = {arXiv},
  eprint       = {2203.15556},
  timestamp    = {Sun, 19 Jan 2025 13:42:19 +0100},
  biburl       = {https://dblp.org/rec/journals/corr/abs-2203-15556.bib},
  bibsource    = {dblp computer science bibliography, https://dblp.org}
}

@inproceedings{jia2019towards,
  author       = {Ruoxi Jia and
                  David Dao and
                  Boxin Wang and
                  Frances Ann Hubis and
                  Nick Hynes and
                  Nezihe Merve G{\"{u}}rel and
                  Bo Li and
                  Ce Zhang and
                  Dawn Song and
                  Costas J. Spanos},
  editor       = {Kamalika Chaudhuri and
                  Masashi Sugiyama},
  title        = {Towards Efficient Data Valuation Based on the Shapley Value},
  booktitle    = {The 22nd International Conference on Artificial Intelligence and Statistics,
                  {AISTATS} 2019, 16-18 April 2019, Naha, Okinawa, Japan},
  series       = {Proceedings of Machine Learning Research},
  volume       = {89},
  pages        = {1167--1176},
  publisher    = {{PMLR}},
  year         = {2019},
  url          = {http://proceedings.mlr.press/v89/jia19a.html},
  timestamp    = {Tue, 19 Mar 2024 17:22:10 +0100},
  biburl       = {https://dblp.org/rec/conf/aistats/JiaDWHHGLZSS19.bib},
  bibsource    = {dblp computer science bibliography, https://dblp.org}
}

@inproceedings{ghorbani2019data,
  author       = {Amirata Ghorbani and
                  James Y. Zou},
  editor       = {Kamalika Chaudhuri and
                  Ruslan Salakhutdinov},
  title        = {Data Shapley: Equitable Valuation of Data for Machine Learning},
  booktitle    = {Proceedings of the 36th International Conference on Machine Learning,
                  {ICML} 2019, 9-15 June 2019, Long Beach, California, {USA}},
  series       = {Proceedings of Machine Learning Research},
  volume       = {97},
  pages        = {2242--2251},
  publisher    = {{PMLR}},
  year         = {2019},
  url          = {http://proceedings.mlr.press/v97/ghorbani19c.html},
  timestamp    = {Tue, 11 Jun 2019 15:37:38 +0200},
  biburl       = {https://dblp.org/rec/conf/icml/GhorbaniZ19.bib},
  bibsource    = {dblp computer science bibliography, https://dblp.org}
}

@inproceedings{koh2017understanding,
  author       = {Pang Wei Koh and
                  Percy Liang},
  editor       = {Doina Precup and
                  Yee Whye Teh},
  title        = {Understanding Black-box Predictions via Influence Functions},
  booktitle    = {Proceedings of the 34th International Conference on Machine Learning,
                  {ICML} 2017, Sydney, NSW, Australia, 6-11 August 2017},
  series       = {Proceedings of Machine Learning Research},
  volume       = {70},
  pages        = {1885--1894},
  publisher    = {{PMLR}},
  year         = {2017},
  url          = {http://proceedings.mlr.press/v70/koh17a.html},
  timestamp    = {Wed, 29 May 2019 08:41:45 +0200},
  biburl       = {https://dblp.org/rec/conf/icml/KohL17.bib},
  bibsource    = {dblp computer science bibliography, https://dblp.org}
}

@inproceedings{pruthi2020estimating,
  author       = {Garima Pruthi and
                  Frederick Liu and
                  Satyen Kale and
                  Mukund Sundararajan},
  editor       = {Hugo Larochelle and
                  Marc'Aurelio Ranzato and
                  Raia Hadsell and
                  Maria{-}Florina Balcan and
                  Hsuan{-}Tien Lin},
  title        = {Estimating Training Data Influence by Tracing Gradient Descent},
  booktitle    = {Advances in Neural Information Processing Systems 33: Annual Conference
                  on Neural Information Processing Systems 2020, NeurIPS 2020, December
                  6-12, 2020, virtual},
  year         = {2020},
  url          = {https://proceedings.neurips.cc/paper/2020/hash/e6385d39ec9394f2f3a354d9d2b88eec-Abstract.html},
  timestamp    = {Tue, 19 Jan 2021 15:56:59 +0100},
  biburl       = {https://dblp.org/rec/conf/nips/PruthiLKS20.bib},
  bibsource    = {dblp computer science bibliography, https://dblp.org}
}

@inproceedings{DBLP:conf/nips/KohATL19,
  author       = {Pang Wei Koh and
                  Kai{-}Siang Ang and
                  Hubert H. K. Teo and
                  Percy Liang},
  editor       = {Hanna M. Wallach and
                  Hugo Larochelle and
                  Alina Beygelzimer and
                  Florence d'Alch{\'{e}}{-}Buc and
                  Emily B. Fox and
                  Roman Garnett},
  title        = {On the Accuracy of Influence Functions for Measuring Group Effects},
  booktitle    = {Advances in Neural Information Processing Systems 32: Annual Conference
                  on Neural Information Processing Systems 2019, NeurIPS 2019, December
                  8-14, 2019, Vancouver, BC, Canada},
  pages        = {5255--5265},
  year         = {2019},
  url          = {https://proceedings.neurips.cc/paper/2019/hash/a78482ce76496fcf49085f2190e675b4-Abstract.html},
  timestamp    = {Mon, 16 May 2022 15:41:51 +0200},
  biburl       = {https://dblp.org/rec/conf/nips/KohATL19.bib},
  bibsource    = {dblp computer science bibliography, https://dblp.org}
}

@article{jacot2018neural,
  author       = {Arthur Jacot and
                  Cl{\'{e}}ment Hongler and
                  Franck Gabriel},
  editor       = {Samy Bengio and
                  Hanna M. Wallach and
                  Hugo Larochelle and
                  Kristen Grauman and
                  Nicol{\`{o}} Cesa{-}Bianchi and
                  Roman Garnett},
  title        = {Neural Tangent Kernel: Convergence and Generalization in Neural Networks},
  booktitle    = {Advances in Neural Information Processing Systems 31: Annual Conference
                  on Neural Information Processing Systems 2018, NeurIPS 2018, December
                  3-8, 2018, Montr{\'{e}}al, Canada},
  pages        = {8580--8589},
  year         = {2018},
  url          = {https://proceedings.neurips.cc/paper/2018/hash/5a4be1fa34e62bb8a6ec6b91d2462f5a-Abstract.html},
  timestamp    = {Mon, 16 May 2022 15:41:51 +0200},
  biburl       = {https://dblp.org/rec/conf/nips/JacotHG18.bib},
  bibsource    = {dblp computer science bibliography, https://dblp.org}
}

@inproceedings{DBLP:conf/nips/CaoG19a,
  author       = {Yuan Cao and
                  Quanquan Gu},
  editor       = {Hanna M. Wallach and
                  Hugo Larochelle and
                  Alina Beygelzimer and
                  Florence d'Alch{\'{e}}{-}Buc and
                  Emily B. Fox and
                  Roman Garnett},
  title        = {Generalization Bounds of Stochastic Gradient Descent for Wide and
                  Deep Neural Networks},
  booktitle    = {Advances in Neural Information Processing Systems 32: Annual Conference
                  on Neural Information Processing Systems 2019, NeurIPS 2019, December
                  8-14, 2019, Vancouver, BC, Canada},
  pages        = {10835--10845},
  year         = {2019},
  url          = {https://proceedings.neurips.cc/paper/2019/hash/cf9dc5e4e194fc21f397b4cac9cc3ae9-Abstract.html},
  timestamp    = {Mon, 16 May 2022 15:41:51 +0200},
  biburl       = {https://dblp.org/rec/conf/nips/CaoG19a.bib},
  bibsource    = {dblp computer science bibliography, https://dblp.org}
}

@inproceedings{DBLP:conf/icml/YangL21a,
  author       = {Greg Yang and
                  Etai Littwin},
  editor       = {Marina Meila and
                  Tong Zhang},
  title        = {Tensor Programs IIb: Architectural Universality Of Neural Tangent
                  Kernel Training Dynamics},
  booktitle    = {Proceedings of the 38th International Conference on Machine Learning,
                  {ICML} 2021, 18-24 July 2021, Virtual Event},
  series       = {Proceedings of Machine Learning Research},
  volume       = {139},
  pages        = {11762--11772},
  publisher    = {{PMLR}},
  year         = {2021},
  url          = {http://proceedings.mlr.press/v139/yang21f.html},
  timestamp    = {Wed, 25 Aug 2021 17:11:17 +0200},
  biburl       = {https://dblp.org/rec/conf/icml/YangL21a.bib},
  bibsource    = {dblp computer science bibliography, https://dblp.org}
}

@article{gretton2012kernel,
  title={A kernel two-sample test},
  author={Gretton, Arthur and Borgwardt, Karsten M and Rasch, Malte J and Sch{\"o}lkopf, Bernhard and Smola, Alexander},
  journal={The Journal of Machine Learning Research},
  volume={13},
  number={1},
  pages={723--773},
  year={2012},
  publisher={JMLR. org}
}

@book{zipf2013psycho,
  title={The psycho-biology of language: An introduction to dynamic philology},
  author={Zipf, George Kingsley},
  year={2013},
  publisher={Routledge}
}

@inproceedings{DBLP:conf/icml/LongC0J15,
  author       = {Mingsheng Long and
                  Yue Cao and
                  Jianmin Wang and
                  Michael I. Jordan},
  editor       = {Francis R. Bach and
                  David M. Blei},
  title        = {Learning Transferable Features with Deep Adaptation Networks},
  booktitle    = {Proceedings of the 32nd International Conference on Machine Learning,
                  {ICML} 2015, Lille, France, 6-11 July 2015},
  series       = {{JMLR} Workshop and Conference Proceedings},
  volume       = {37},
  pages        = {97--105},
  publisher    = {JMLR.org},
  year         = {2015},
  url          = {http://proceedings.mlr.press/v37/long15.html},
  timestamp    = {Mon, 26 Jun 2023 20:40:34 +0200},
  biburl       = {https://dblp.org/rec/conf/icml/LongC0J15.bib},
  bibsource    = {dblp computer science bibliography, https://dblp.org}
}

@article{DBLP:journals/corr/abs-1207-6076,
  author       = {Dino Sejdinovic and
                  Bharath K. Sriperumbudur and
                  Arthur Gretton and
                  Kenji Fukumizu},
  title        = {Equivalence of distance-based and RKHS-based statistics in hypothesis
                  testing},
  journal      = {CoRR},
  volume       = {abs/1207.6076},
  year         = {2012},
  url          = {http://arxiv.org/abs/1207.6076},
  eprinttype    = {arXiv},
  eprint       = {1207.6076},
  timestamp    = {Mon, 13 Aug 2018 16:46:34 +0200},
  biburl       = {https://dblp.org/rec/journals/corr/abs-1207-6076.bib},
  bibsource    = {dblp computer science bibliography, https://dblp.org}
}

@inproceedings{DBLP:conf/nips/GrettonSSSBPF12,
  author       = {Arthur Gretton and
                  Bharath K. Sriperumbudur and
                  Dino Sejdinovic and
                  Heiko Strathmann and
                  Sivaraman Balakrishnan and
                  Massimiliano Pontil and
                  Kenji Fukumizu},
  editor       = {Peter L. Bartlett and
                  Fernando C. N. Pereira and
                  Christopher J. C. Burges and
                  L{\'{e}}on Bottou and
                  Kilian Q. Weinberger},
  title        = {Optimal kernel choice for large-scale two-sample tests},
  booktitle    = {Advances in Neural Information Processing Systems 25: 26th Annual
                  Conference on Neural Information Processing Systems 2012. Proceedings
                  of a meeting held December 3-6, 2012, Lake Tahoe, Nevada, United States},
  pages        = {1214--1222},
  year         = {2012},
  url          = {https://proceedings.neurips.cc/paper/2012/hash/dbe272bab69f8e13f14b405e038deb64-Abstract.html},
  timestamp    = {Mon, 16 May 2022 15:41:51 +0200},
  biburl       = {https://dblp.org/rec/conf/nips/GrettonSSSBPF12.bib},
  bibsource    = {dblp computer science bibliography, https://dblp.org}
}

@inproceedings{park2023trak,
  author       = {Sung Min Park and
                  Kristian Georgiev and
                  Andrew Ilyas and
                  Guillaume Leclerc and
                  Aleksander Madry},
  editor       = {Andreas Krause and
                  Emma Brunskill and
                  Kyunghyun Cho and
                  Barbara Engelhardt and
                  Sivan Sabato and
                  Jonathan Scarlett},
  title        = {{TRAK:} Attributing Model Behavior at Scale},
  booktitle    = {International Conference on Machine Learning, {ICML} 2023, 23-29 July
                  2023, Honolulu, Hawaii, {USA}},
  series       = {Proceedings of Machine Learning Research},
  volume       = {202},
  pages        = {27074--27113},
  publisher    = {{PMLR}},
  year         = {2023},
  url          = {https://proceedings.mlr.press/v202/park23c.html},
  timestamp    = {Mon, 28 Aug 2023 17:23:08 +0200},
  biburl       = {https://dblp.org/rec/conf/icml/ParkGILM23.bib},
  bibsource    = {dblp computer science bibliography, https://dblp.org}
}

@inproceedings{krizhevsky2009learning,
  title={Learning multiple layers of features from tiny images},
  author={Krizhevsky, Alex and Hinton, Geoffrey and others},
  year={2009},
  publisher={Toronto, ON, Canada}
}

@inproceedings{DBLP:conf/iclr/HendrycksD19,
  author       = {Dan Hendrycks and
                  Thomas G. Dietterich},
  title        = {Benchmarking Neural Network Robustness to Common Corruptions and Perturbations},
  booktitle    = {7th International Conference on Learning Representations, {ICLR} 2019,
                  New Orleans, LA, USA, May 6-9, 2019},
  publisher    = {OpenReview.net},
  year         = {2019},
  url          = {https://openreview.net/forum?id=HJz6tiCqYm},
  timestamp    = {Thu, 25 Jul 2019 14:25:46 +0200},
  biburl       = {https://dblp.org/rec/conf/iclr/HendrycksD19.bib},
  bibsource    = {dblp computer science bibliography, https://dblp.org}
}

@inproceedings{lin2024deviation,
  author       = {Xiaoqiang Lin and
                  Xinyi Xu and
                  Zhaoxuan Wu and
                  See{-}Kiong Ng and
                  Bryan Kian Hsiang Low},
  title        = {Distributionally Robust Data Valuation},
  booktitle    = {Forty-first International Conference on Machine Learning, {ICML} 2024,
                  Vienna, Austria, July 21-27, 2024},
  publisher    = {OpenReview.net},
  year         = {2024},
  url          = {https://openreview.net/forum?id=mbBehLOAqR},
  timestamp    = {Mon, 02 Sep 2024 16:55:26 +0200},
  biburl       = {https://dblp.org/rec/conf/icml/LinXWNL24.bib},
  bibsource    = {dblp computer science bibliography, https://dblp.org}
}

@inproceedings{wu2022davinz,
  author       = {Zhaoxuan Wu and
                  Yao Shu and
                  Bryan Kian Hsiang Low},
  editor       = {Kamalika Chaudhuri and
                  Stefanie Jegelka and
                  Le Song and
                  Csaba Szepesv{\'{a}}ri and
                  Gang Niu and
                  Sivan Sabato},
  title        = {{DAVINZ:} Data Valuation using Deep Neural Networks at Initialization},
  booktitle    = {International Conference on Machine Learning, {ICML} 2022, 17-23 July
                  2022, Baltimore, Maryland, {USA}},
  series       = {Proceedings of Machine Learning Research},
  volume       = {162},
  pages        = {24150--24176},
  publisher    = {{PMLR}},
  year         = {2022},
  url          = {https://proceedings.mlr.press/v162/wu22j.html},
  timestamp    = {Tue, 12 Jul 2022 17:36:52 +0200},
  biburl       = {https://dblp.org/rec/conf/icml/WuSL22.bib},
  bibsource    = {dblp computer science bibliography, https://dblp.org}
}

@inproceedings{wang2021data,
  author       = {Jinsung Yoon and
                  Sercan {\"{O}}mer Arik and
                  Tomas Pfister},
  title        = {Data Valuation using Reinforcement Learning},
  booktitle    = {Proceedings of the 37th International Conference on Machine Learning,
                  {ICML} 2020, 13-18 July 2020, Virtual Event},
  series       = {Proceedings of Machine Learning Research},
  volume       = {119},
  pages        = {10842--10851},
  publisher    = {{PMLR}},
  year         = {2020},
  url          = {http://proceedings.mlr.press/v119/yoon20a.html},
  timestamp    = {Tue, 15 Dec 2020 17:40:19 +0100},
  biburl       = {https://dblp.org/rec/conf/icml/YoonAP20.bib},
  bibsource    = {dblp computer science bibliography, https://dblp.org}
}

@article{kaplan2020scaling,
  author       = {Jared Kaplan and
                  Sam McCandlish and
                  Tom Henighan and
                  Tom B. Brown and
                  Benjamin Chess and
                  Rewon Child and
                  Scott Gray and
                  Alec Radford and
                  Jeffrey Wu and
                  Dario Amodei},
  title        = {Scaling Laws for Neural Language Models},
  journal      = {CoRR},
  volume       = {abs/2001.08361},
  year         = {2020},
  url          = {https://arxiv.org/abs/2001.08361},
  eprinttype    = {arXiv},
  eprint       = {2001.08361},
  timestamp    = {Wed, 03 Jun 2020 10:55:13 +0200},
  biburl       = {https://dblp.org/rec/journals/corr/abs-2001-08361.bib},
  bibsource    = {dblp computer science bibliography, https://dblp.org}
}

@article{hernandez2021scaling,
  author       = {Danny Hernandez and
                  Jared Kaplan and
                  Tom Henighan and
                  Sam McCandlish},
  title        = {Scaling Laws for Transfer},
  journal      = {CoRR},
  volume       = {abs/2102.01293},
  year         = {2021},
  url          = {https://arxiv.org/abs/2102.01293},
  eprinttype    = {arXiv},
  eprint       = {2102.01293},
  timestamp    = {Tue, 09 Feb 2021 13:35:56 +0100},
  biburl       = {https://dblp.org/rec/journals/corr/abs-2102-01293.bib},
  bibsource    = {dblp computer science bibliography, https://dblp.org}
}

@inproceedings{maas2011learning,
  author       = {Andrew L. Maas and
                  Raymond E. Daly and
                  Peter T. Pham and
                  Dan Huang and
                  Andrew Y. Ng and
                  Christopher Potts},
  editor       = {Dekang Lin and
                  Yuji Matsumoto and
                  Rada Mihalcea},
  title        = {Learning Word Vectors for Sentiment Analysis},
  booktitle    = {The 49th Annual Meeting of the Association for Computational Linguistics:
                  Human Language Technologies, Proceedings of the Conference, 19-24
                  June, 2011, Portland, Oregon, {USA}},
  pages        = {142--150},
  publisher    = {The Association for Computer Linguistics},
  year         = {2011},
  url          = {https://aclanthology.org/P11-1015/},
  timestamp    = {Fri, 06 Aug 2021 00:41:04 +0200},
  biburl       = {https://dblp.org/rec/conf/acl/MaasDPHNP11.bib},
  bibsource    = {dblp computer science bibliography, https://dblp.org}
}

@article{mishra2021natural,
  author       = {Swaroop Mishra and
                  Daniel Khashabi and
                  Chitta Baral and
                  Hannaneh Hajishirzi},
  title        = {Natural Instructions: Benchmarking Generalization to New Tasks from
                  Natural Language Instructions},
  journal      = {CoRR},
  volume       = {abs/2104.08773},
  year         = {2021},
  url          = {https://arxiv.org/abs/2104.08773},
  eprinttype    = {arXiv},
  eprint       = {2104.08773},
  timestamp    = {Mon, 26 Apr 2021 17:25:10 +0200},
  biburl       = {https://dblp.org/rec/journals/corr/abs-2104-08773.bib},
  bibsource    = {dblp computer science bibliography, https://dblp.org}
}

@article{xu2024magpie,
  author       = {Zhangchen Xu and
                  Fengqing Jiang and
                  Luyao Niu and
                  Yuntian Deng and
                  Radha Poovendran and
                  Yejin Choi and
                  Bill Yuchen Lin},
  title        = {Magpie: Alignment Data Synthesis from Scratch by Prompting Aligned
                  LLMs with Nothing},
  journal      = {CoRR},
  volume       = {abs/2406.08464},
  year         = {2024},
  url          = {https://doi.org/10.48550/arXiv.2406.08464},
  doi          = {10.48550/ARXIV.2406.08464},
  eprinttype    = {arXiv},
  eprint       = {2406.08464},
  timestamp    = {Tue, 09 Jul 2024 17:23:21 +0200},
  biburl       = {https://dblp.org/rec/journals/corr/abs-2406-08464.bib},
  bibsource    = {dblp computer science bibliography, https://dblp.org}
}

@article{zhou2023instruction,
  author       = {Jeffrey Zhou and
                  Tianjian Lu and
                  Swaroop Mishra and
                  Siddhartha Brahma and
                  Sujoy Basu and
                  Yi Luan and
                  Denny Zhou and
                  Le Hou},
  title        = {Instruction-Following Evaluation for Large Language Models},
  journal      = {CoRR},
  volume       = {abs/2311.07911},
  year         = {2023},
  url          = {https://doi.org/10.48550/arXiv.2311.07911},
  doi          = {10.48550/ARXIV.2311.07911},
  eprinttype    = {arXiv},
  eprint       = {2311.07911},
  timestamp    = {Tue, 21 Nov 2023 13:55:21 +0100},
  biburl       = {https://dblp.org/rec/journals/corr/abs-2311-07911.bib},
  bibsource    = {dblp computer science bibliography, https://dblp.org}
}

@article{li2024numinamath,
  title={Numinamath: The largest public dataset in ai4maths with 860k pairs of competition math problems and solutions},
  author={Li, Jia and Beeching, Edward and Tunstall, Lewis and Lipkin, Ben and Soletskyi, Roman and Huang, Shengyi and Rasul, Kashif and Yu, Longhui and Jiang, Albert Q and Shen, Ziju and others},
  journal={Hugging Face repository},
  volume={13},
  pages={9},
  year={2024}
}

@inproceedings{DBLP:conf/emnlp/SocherPWCMNP13,
  author       = {Richard Socher and
                  Alex Perelygin and
                  Jean Wu and
                  Jason Chuang and
                  Christopher D. Manning and
                  Andrew Y. Ng and
                  Christopher Potts},
  title        = {Recursive Deep Models for Semantic Compositionality Over a Sentiment
                  Treebank},
  booktitle    = {Proceedings of the 2013 Conference on Empirical Methods in Natural
                  Language Processing, {EMNLP} 2013, 18-21 October 2013, Grand Hyatt
                  Seattle, Seattle, Washington, USA, {A} meeting of SIGDAT, a Special
                  Interest Group of the {ACL}},
  pages        = {1631--1642},
  publisher    = {{ACL}},
  year         = {2013},
  url          = {https://aclanthology.org/D13-1170/},
  timestamp    = {Fri, 06 Aug 2021 00:40:22 +0200},
  biburl       = {https://dblp.org/rec/conf/emnlp/SocherPWCMNP13.bib},
  bibsource    = {dblp computer science bibliography, https://dblp.org}
}

@inproceedings{DBLP:conf/icml/KandpalDRWR23,
  author       = {Nikhil Kandpal and
                  Haikang Deng and
                  Adam Roberts and
                  Eric Wallace and
                  Colin Raffel},
  editor       = {Andreas Krause and
                  Emma Brunskill and
                  Kyunghyun Cho and
                  Barbara Engelhardt and
                  Sivan Sabato and
                  Jonathan Scarlett},
  title        = {Large Language Models Struggle to Learn Long-Tail Knowledge},
  booktitle    = {International Conference on Machine Learning, {ICML} 2023, 23-29 July
                  2023, Honolulu, Hawaii, {USA}},
  series       = {Proceedings of Machine Learning Research},
  volume       = {202},
  pages        = {15696--15707},
  publisher    = {{PMLR}},
  year         = {2023},
  url          = {https://proceedings.mlr.press/v202/kandpal23a.html},
  timestamp    = {Mon, 28 Aug 2023 17:23:08 +0200},
  biburl       = {https://dblp.org/rec/conf/icml/KandpalDRWR23.bib},
  bibsource    = {dblp computer science bibliography, https://dblp.org}
}

@article{plaat2024reasoning,
  title={Reasoning with large language models, a survey},
  author={Plaat, Aske and Wong, Annie and Verberne, Suzan and Broekens, Joost and van Stein, Niki and Back, Thomas},
  journal={arXiv preprint arXiv:2407.11511},
  year={2024}
}

@article{lou2024large,
  title={Large language model instruction following: A survey of progresses and challenges},
  author={Lou, Renze and Zhang, Kai and Yin, Wenpeng},
  journal={Computational Linguistics},
  volume={50},
  number={3},
  pages={1053--1095},
  year={2024},
  publisher={MIT Press 255 Main Street, 9th Floor, Cambridge, Massachusetts 02142, USA~…}
}

@article{radford2018improving,
  title={Improving language understanding by generative pre-training},
  author={Radford, Alec and Narasimhan, Karthik and Salimans, Tim and Sutskever, Ilya and others},
  year={2018},
  publisher={San Francisco, CA, USA}
}

@article{liang2024controllable,
  title={Controllable text generation for large language models: A survey},
  author={Liang, Xun and Wang, Hanyu and Wang, Yezhaohui and Song, Shichao and Yang, Jiawei and Niu, Simin and Hu, Jie and Liu, Dan and Yao, Shunyu and Xiong, Feiyu and others},
  journal={arXiv preprint arXiv:2408.12599},
  year={2024}
}

@article{thakur2023leveraging,
  title={Leveraging llms for synthesizing training data across many languages in multilingual dense retrieval},
  author={Thakur, Nandan and Ni, Jianmo and {\'A}brego, Gustavo Hern{\'a}ndez and Wieting, John and Lin, Jimmy and Cer, Daniel},
  journal={arXiv preprint arXiv:2311.05800},
  year={2023}
}

@article{zhang2024llm,
  title={Llm-assisted data augmentation for chinese dialogue-level dependency parsing},
  author={Zhang, Meishan and Jiang, Gongyao and Liu, Shuang and Chen, Jing and Zhang, Min},
  journal={Computational Linguistics},
  volume={50},
  number={3},
  pages={867--891},
  year={2024},
  publisher={MIT Press 255 Main Street, 9th Floor, Cambridge, Massachusetts 02142, USA~…}
}

@article{chen2024unveiling,
  title={Unveiling the flaws: exploring imperfections in synthetic data and mitigation strategies for large language models},
  author={Chen, Jie and Zhang, Yupeng and Wang, Bingning and Zhao, Wayne Xin and Wen, Ji-Rong and Chen, Weipeng},
  journal={arXiv preprint arXiv:2406.12397},
  year={2024}
}

@inproceedings{seddik2024how,
title={How bad is training on synthetic data? A statistical analysis of language model collapse},
author={Mohamed El Amine Seddik and Suei-Wen Chen and Soufiane Hayou and Pierre Youssef and Merouane Abdelkader DEBBAH},
booktitle={First Conference on Language Modeling},
year={2024},
url={https://openreview.net/forum?id=t3z6UlV09o}
}

@article{zhang2024knowledge,
  title={Knowledge overshadowing causes amalgamated hallucination in large language models},
  author={Zhang, Yuji and Li, Sha and Liu, Jiateng and Yu, Pengfei and Fung, Yi R and Li, Jing and Li, Manling and Ji, Heng},
  journal={arXiv preprint arXiv:2407.08039},
  year={2024}
}

@article{lotfi2023non,
  title={Non-vacuous generalization bounds for large language models},
  author={Lotfi, Sanae and Finzi, Marc and Kuang, Yilun and Rudner, Tim GJ and Goldblum, Micah and Wilson, Andrew Gordon},
  journal={arXiv preprint arXiv:2312.17173},
  year={2023}
}

@article{shu2022unifying,
  title={Unifying and boosting gradient-based training-free neural architecture search},
  author={Shu, Yao and Dai, Zhongxiang and Wu, Zhaoxuan and Low, Bryan Kian Hsiang},
  journal={Advances in neural information processing systems},
  volume={35},
  pages={33001--33015},
  year={2022}
}

@article{yang2023fingpt,
  title={FinGPT: Open-Source Financial Large Language Models},
  author={Yang, Hongyang and Liu, Xiao-Yang and Wang, Christina Dan},
  journal={FinLLM Symposium at IJCAI 2023},
  year={2023}
}

@misc{qwen3,
    title  = {Qwen3},
    url    = {https://qwenlm.github.io/blog/qwen3/},
    author = {Qwen Team},
    month  = {April},
    year   = {2025}
}

@article{shumailov2023curse,
  title={The curse of recursion: Training on generated data makes models forget},
  author={Shumailov, Ilia and Shumaylov, Zakhar and Zhao, Yiren and Gal, Yarin and Papernot, Nicolas and Anderson, Ross},
  journal={arXiv preprint arXiv:2305.17493},
  year={2023}
}

@article{chen2024diversity,
  title={On the Diversity of Synthetic Data and its Impact on Training Large Language Models},
  author={Chen, Hao and Waheed, Abdul and Li, Xiang and Wang, Yidong and Wang, Jindong and Raj, Bhiksha and Abdin, Marah I},
  journal={arXiv preprint arXiv:2410.15226},
  year={2024}
}

@article{havrilla2024surveying,
  title={Surveying the Effects of Quality, Diversity, and Complexity in Synthetic Data From Large Language Models},
  author={Havrilla, Alex and Dai, Andrew and O'Mahony, Laura and Oostermeijer, Koen and Zisler, Vera and Albalak, Alon and Milo, Fabrizio and Raparthy, Sharath Chandra and Gandhi, Kanishk and Abbasi, Baber and others},
  journal={arXiv preprint arXiv:2412.02980},
  year={2024}
}

@article{zhang2024regurgitative,
  title={Regurgitative training: The value of real data in training large language models},
  author={Zhang, Jinghui and Qiao, Dandan and Yang, Mochen and Wei, Qiang},
  journal={arXiv preprint arXiv:2407.12835},
  year={2024}
}

@article{ben2010theory,
  title={A theory of learning from different domains},
  author={Ben-David, Shai and Blitzer, John and Crammer, Koby and Kulesza, Alex and Pereira, Fernando and Vaughan, Jennifer Wortman},
  journal={Machine learning},
  volume={79},
  pages={151--175},
  year={2010},
  publisher={Springer}
}

@article{charton2023can,
  title={Can transformers learn the greatest common divisor?},
  author={Charton, Fran{\c{c}}ois},
  journal={CoRR},
  year={2023}
}

@article{lee2019wide,
  title={Wide neural networks of any depth evolve as linear models under gradient descent},
  author={Lee, Jaehoon and Xiao, Lechao and Schoenholz, Samuel and Bahri, Yasaman and Novak, Roman and Sohl-Dickstein, Jascha and Pennington, Jeffrey},
  journal={Advances in neural information processing systems},
  volume={32},
  year={2019}
}

@article{gretton2012optimal,
  title={Optimal kernel choice for large-scale two-sample tests},
  author={Gretton, Arthur and Sejdinovic, Dino and Strathmann, Heiko and Balakrishnan, Sivaraman and Pontil, Massimiliano and Fukumizu, Kenji and Sriperumbudur, Bharath K},
  journal={Advances in neural information processing systems},
  volume={25},
  year={2012}
}

\newpage
\appendix
\section{Proofs of Theoretical Analysis}\label{sec:proof}
This section provides complete proofs for the theoretical analysis. We first give the definition of distribution discrepancy~\citep{gretton2012kernel} between $\mathcal{D}_T$ and $\mathcal{D}_S$ as a measure to quantify distribution divergence in Definition~\ref{def:discrepancy}.
\begin{restatable}[]{definition}{discrepancy}
\label{def:discrepancy}
Given any function space $\mathcal{H}$, the distribution discrepancy between $\mathcal{D}_T$ and $\mathcal{D}_S$ is defined as:
\begin{equation*}
d_{\mathcal{H}}(\mathcal{D}_T, \mathcal{D}_S) \triangleq \sup_{h \in \mathcal{H}} \left| \mathbb{E}_{\bm{x}^{\prime} \sim \mathcal{D}_T}[h(\bm{x}^{\prime})] - \mathbb{E}_{\bm{x} \sim \mathcal{D}_S}[h(\bm{x})] \right|,
\end{equation*}
which can be empirically estimated using samples $\bm{S}$ and $\bm{T}$ from the respective $\mathcal{D}_S$ and $\mathcal{D}_T$:
\begin{equation*}
d_{\mathcal{H}}(\bm{T}, \bm{S}) \triangleq \sup_{h \in \mathcal{H}} \left| \frac{1}{|\bm{T}|} \sum_{i=1}^{|\bm{T}|} h(\bm{x}_i^{\prime}) - \frac{1}{|\bm{S}|} \sum_{i=1}^{|\bm{S}|} h(\bm{x}_i) \right|.
\end{equation*}
\end{restatable}

We then introduce the following lemma, which is adapted from the proof of Theorem 1 in~\citep{wu2022davinz} and the proof of Theorem 2 in~\citep{shu2022unifying}.
\begin{restatable}[]{lemma}{NTKbound}
\label{lemma:NTKbound}
Assume that $\lambda_{\min}(\bm{\Theta}_0) > 0$ and $||\nabla_{\bm{\theta}} f(\bm{x}; \bm{\theta}_0)||_2 \leq B$ for any $(\bm{x}, y) \in \bm{S}$ sampled from $\mathcal{D}_S$ with $||\bm{x}||_2 \leq 1$ and $y \in [0,1]$. Given the loss function $\ell(f, y) \triangleq (f - y)^2/2$ and define $\hat{\bm{y}} \triangleq y - f(\bm{x})$, there exist constants $c > 0$ and $M \in \mathbb{N}$ such that for every $m > M$, when applying gradient descent with learning rate
\[
\eta < \min\left\{ 2m^{-1} \left(\lambda_{\min}(\bm{\Theta}_{\infty}) + \lambda_{\max}(\bm{\Theta}_{\infty}) \right)^{-1}, \; |\bm{S}| \lambda_{\max}^{-1}(\bm{\Theta}_0) \right\},
\]
for all the functions $f_t$ obtained during the optimization, with high probability $(1 - \delta)$ over the dataset $\bm{S}$ of size $|\bm{S}|$, we have
\[
\mathcal{L}_{\mathcal{D}_S}(f_t) \leq \mathcal{L}_S(f_t) + 2B \sqrt{ \hat{\bm{y}}^\top \bm{\Theta}_0^{-1} \hat{\bm{y}} / |\bm{S}| } + \varepsilon,
\]
where $\hat{\bm{y}} = [\hat{y}_1, \dots, \hat{y}_{|\bm{S}|}]^\top$, $\varepsilon \triangleq 2c / \sqrt{m} + 3 \sqrt{ \log(4/\delta)/2|\bm{S}| }$, and $\lambda_{\min}(\cdot), \lambda_{\max}(\cdot)$ denote the minimum and maximum eigenvalue of a matrix, respectively.
\end{restatable}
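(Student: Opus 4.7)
\textbf{Proof Plan for Lemma~\ref{lemma:NTKbound}.} The plan is to combine a standard Rademacher-complexity generalization bound with the NTK characterization of the function class traversed by gradient descent in the lazy-training regime. First I would apply a uniform-convergence argument: for any bounded loss class $\mathcal{F}$ and i.i.d. sample $\bm{S}$ from $\mathcal{D}_S$, with probability at least $1-\delta$,
\begin{equation*}
\sup_{f\in\mathcal{F}}\bigl(\mathcal{L}_{\mathcal{D}_S}(f)-\mathcal{L}_{\bm{S}}(f)\bigr)\ \leq\ 2\,\mathfrak{R}_{|\bm{S}|}(\mathcal{F})\ +\ 3\sqrt{\tfrac{\log(4/\delta)}{2|\bm{S}|}},
\end{equation*}
via McDiarmid and symmetrization. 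The squared loss is $B$-Lipschitz in $f$ on the relevant range (since $\|\bm{x}\|_2\le 1$, $y\in[0,1]$, and $\|\nabla_\theta f\|_2\le B$ keeps outputs bounded), so the $2B$ factor on the final NTK term will emerge from a contraction step applied to $\mathfrak{R}_{|\bm{S}|}(\mathcal{F})$.

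The second step is to identify the effective function class $\mathcal{F}$ that contains every iterate $f_t$. Here I would invoke the NTK lazy-training theory: with overparameterization $m>M$ and the stated cap on $\eta$, the parameter trajectory under gradient descent on $\ell(f,y)=(f-y)^2/2$ stays in a small ball around $\bm{\theta}_0$, and $f_t$ is uniformly close to its linearization $f^{\mathrm{lin}}_t(\bm{x})=f(\bm{x};\bm{\theta}_0)+\nabla_\theta f(\bm{x};\bm{\theta}_0)^\top(\bm{\theta}_t-\bm{\theta}_0)$. The kernel gradient-flow closed form then gives $\bm{\theta}_t-\bm{\theta}_0$ as a linear function of the residual vector $\hat{\bm{y}}$, and in particular
\begin{equation*}
\|f^{\mathrm{lin}}_t-f(\cdot;\bm{\theta}_0)\|_{\mathcal{H}_{\bm{\Theta}_0}}^{\,2}\ \leq\ \hat{\bm{y}}^\top\bm{\Theta}_0^{-1}\hat{\bm{y}},
\end{equation*}
where $\mathcal{H}_{\bm{\Theta}_0}$ is the RKHS of the empirical NTK; the assumption $\lambda_{\min}(\bm{\Theta}_0)>0$ is what makes $\bm{\Theta}_0^{-1}$ well-defined. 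Taking $\mathcal{F}$ to be this RKHS ball (plus the initial function) captures every iterate.

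Third, I would bound the empirical Rademacher complexity of an RKHS ball of radius $R$ by the standard estimate $\mathfrak{R}_{|\bm{S}|}(\mathcal{F})\le \sqrt{R^{2}\,\mathrm{tr}(\bm{\Theta}_0)/|\bm{S}|^{2}}$ and, after using $\|\nabla_\theta f(\bm{x};\bm{\theta}_0)\|_2\le B$ to control $\mathrm{tr}(\bm{\Theta}_0)/|\bm{S}|\le B^2$ together with the Lipschitz contraction for the squared loss, arrive at the $2B\sqrt{\hat{\bm{y}}^\top\bm{\Theta}_0^{-1}\hat{\bm{y}}/|\bm{S}|}$ term. Finally I would absorb the gap between $f_t$ and $f^{\mathrm{lin}}_t$ into the $2c/\sqrt{m}$ slack via the NTK approximation lemma (this is where $M$ and the constant $c$ come from, using the learning-rate constraint $\eta<2/(m(\lambda_{\min}+\lambda_{\max})))$ to keep the linearization error $O(1/\sqrt{m})$ uniformly in $t$).

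The main obstacle will be the third step: making the RKHS-ball containment and the Rademacher-complexity bound fully uniform over all iterates $t$ and compatible with the $\sqrt{\hat{\bm{y}}^\top\bm{\Theta}_0^{-1}\hat{\bm{y}}/|\bm{S}|}$ scaling, because the residual $\hat{\bm{y}}$ is defined at initialization while $f_t$ drifts during training. I would handle this by exploiting that gradient flow on the linearized model monotonically shrinks the RKHS norm of $f^{\mathrm{lin}}_t-f(\cdot;\bm{\theta}_0)$, so the radius $\sqrt{\hat{\bm{y}}^\top\bm{\Theta}_0^{-1}\hat{\bm{y}}}$ computed at $t=0$ upper-bounds every later iterate, and transferring this back to the nonlinear $f_t$ costs only the $O(1/\sqrt{m})$ slack already included in $\varepsilon$.
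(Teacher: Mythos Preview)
Your proposal is correct and matches the approach the paper relies on: the paper does not supply its own proof of this lemma but states it is adapted from Theorem~1 in~\citep{wu2022davinz} and Theorem~2 in~\citep{shu2022unifying}, and those arguments are exactly the Rademacher-complexity-plus-NTK-linearization scheme you outline (uniform convergence with McDiarmid, Talagrand contraction for the $1$-Lipschitz squared loss, RKHS-ball containment of the gradient-descent trajectory via the lazy-training linearization, and an $O(1/\sqrt{m})$ slack for the nonlinear-to-linear gap).

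One small correction to your final paragraph: the RKHS norm $\|f^{\mathrm{lin}}_t-f(\cdot;\bm{\theta}_0)\|_{\mathcal{H}_{\bm{\Theta}_0}}$ does not monotonically \emph{shrink} along the trajectory---it starts at zero (since $f^{\mathrm{lin}}_0=f(\cdot;\bm{\theta}_0)$) and \emph{grows} toward the minimum-norm interpolant value $\sqrt{\hat{\bm{y}}^\top\bm{\Theta}_0^{-1}\hat{\bm{y}}}$. The containment you actually need is that it never exceeds this limit, which follows from the closed-form kernel-regression iterates under the learning-rate cap $\eta<|\bm{S}|/\lambda_{\max}(\bm{\Theta}_0)$ (equivalently, from the parameter-space bound $\|\bm{\theta}_t-\bm{\theta}_0\|_2\le\sqrt{\hat{\bm{y}}^\top\bm{\Theta}_0^{-1}\hat{\bm{y}}}$ in the cited works). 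With that fix the rest of your argument goes through unchanged.
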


With the above definition and lemma, we are now ready to prove Theorem~\ref{thm:bound}.
\errorBound*
\begin{proof}
Let $\phi_S$ and $\phi_T$ be the probability density function for data distribution $\mathcal{D}_S$ and $\mathcal{D}_T$, respectively. From~\citep{ben2010theory}, the generalization performance on $\mathcal{D}_T$ can therefore be bounded using the generalization performance on $\mathcal{D}_S$ by assuming that the loss function $\ell(\cdot, \cdot)$ is $\mu$-Lipschitz continuous, where $\mu>0$ denotes a Lipschitz constant:
\begin{equation}\label{eq:generalization_test}
\begin{aligned}
\mathcal{L}_{\mathcal{D}_T}(f) \leq & \mathcal{L}_{\mathcal{D}_T}(f^*) + \mathbb{E}_{(\bm{x},y) \sim \mathcal{D}_T}  \left| \ell(f(\bm{x}), y) - \ell(f^*(\bm{x}), y) \right|  \\
\leq & \mathcal{L}_{\mathcal{D}_T}(f^*) + \mathbb{E}_{(\bm{x},y) \sim \mathcal{D}_S}  \left| \ell(f(\bm{x}), y) - \ell(f^*(\bm{x}), y) \right| + \\
& \left| \mathbb{E}_{(\bm{x},y) \sim \mathcal{D}_S} | \ell(f(\bm{x}), y) - \ell(f^*(\bm{x}), y) | - \mathbb{E}_{(\bm{x},y) \sim \mathcal{D}_T} | \ell(f(\bm{x}), y) - \ell(f^*(\bm{x}), y) | \right| \\
\leq & \mathcal{L}_{\mathcal{D}_T}(f^*) + \mathbb{E}_{(\bm{x},y) \sim \mathcal{D}_S} \left[ \left| \ell(f(\bm{x}), y) - \ell(f^*(\bm{x}), y) \right| \right] + \\
& \left| \int (\phi_S(\bm{\bm{x}}) - \phi_T(\bm{x})) \left( \ell(f(\bm{x}), y) - \ell(f^*(\bm{x}), y) \right) \, d\bm{x} \right| \\
\leq & \mathcal{L}_{\mathcal{D}_T}(f^*) + \mathbb{E}_{(\bm{x},y) \sim \mathcal{D}_S} \left( \ell(f(\bm{x}), y) + \ell(f^*(\bm{x}), y) \right) + \\
& \mu \left| \int (\phi_S(\bm{x}) - \phi_T(\bm{x})) |f(\bm{x}) - f^*(\bm{x})| \, d\bm{x} \right| \\
\leq & \mathcal{L}_{\mathcal{D}_T}(f^*) + \mathbb{E}_{(\bm{x},y) \sim \mathcal{D}_S} \ell(f(\bm{x}), y) + \mathbb{E}_{(\bm{x},y) \sim \mathcal{D}_S} \ell(f^*(\bm{x}), y) + \\
& \mu \left| \int (\phi_S(\bm{x}) - \phi_T(\bm{x})) h(\bm{x}) \, d\bm{x} \right| \\
\leq & \mathcal{L}_{\mathcal{D}_T}(f^*) + \mathcal{L}_{\mathcal{D}_S}(f^*) + \mathcal{L}_{\mathcal{D}_S}(f) + \mu \sup_{h \in \mathcal{H}} \left| \mathbb{E}_{\mathcal{D}_S}[h(\bm{x})] - \mathbb{E}_{\mathcal{D}_T}[h(\bm{x})] \right| \\
\leq & \mathcal{L}_{\mathcal{D}_T}(f^*) + \mathcal{L}_{\mathcal{D}_S}(f^*) + \mathcal{L}_{\mathcal{D}_S}(f) + \mu d_{\mathcal{H}}(\mathcal{D}_S, \mathcal{D}_T).
\end{aligned} 
\end{equation}

Next, we approximate $d_{\mathcal{H}}(\mathcal{D}_S, \mathcal{D}_T)$ using $d_{\mathcal{H}}(\bm{T}, \bm{S}_1)$ and $d_{\mathcal{H}}(\bm{T}, \bm{S}_2)$ where $\bm{T}$, $\bm{S}_1$, and $\bm{S}_2$ denote the test, real and synthetic datasets. Following Hoeffding's inequality and the assumption stated in the main text that $h(\bm{x}) \leq 1$, we have:
\begin{equation}
\begin{aligned}
&\mathbb{P} \left( \left| \mathbb{E}_{\mathcal{D}_S}[h(\bm{x})] - \frac{1}{|\bm{S}|} \sum_{i=1}^{|\bm{S}|} h(\bm{x}_i) \right| \geq \varepsilon \right) \\
\leq & \mathbb{P} \left( \left| \mathbb{E}_{\mathcal{D}_S}[h(\bm{x})] - \frac{1}{\pi |\bm{S}|} \sum_{i=1}^{\pi |\bm{S}|} h(\bm{x}_i) \right| \geq \frac{\varepsilon}{2\pi} \right) + \\
&\mathbb{P} \left( \left| \mathbb{E}_{\mathcal{D}_S}[h(\bm{x})] - \frac{1}{(1 - \pi) |\bm{S}|} \sum_{i = \pi |\bm{S}| + 1}^{|\bm{S}|} h(\bm{x}_i) \right| \geq \frac{\varepsilon}{2(1-\pi)} \right)\\
\leq & 2 \exp \left( - \frac{\varepsilon^2 |\bm{S}|}{2 \pi} \right) + 2 \exp \left( - \frac{\varepsilon^2 |\bm{S}|}{2 (1 - \pi)} \right) \\
\leq & 4 \max \left\{ \exp \left( - \frac{\varepsilon^2 |\bm{S}|}{2 \pi} \right), \exp \left( - \frac{\varepsilon^2 |\bm{S}|}{2(1 - \pi)} \right) \right\} \\
= & 4 \exp \left( - \frac{\varepsilon^2 |\bm{S}|}{2 \max(\pi, 1 - \pi)} \right).
\end{aligned}
\end{equation}

Then the following inequality holds with probability at least $1 - \delta$:
\begin{equation}
\begin{aligned}
    &\left| \mathbb{E}_{\mathcal{D}_S}[h(\bm{x})] - \mathbb{E}_{\mathcal{D}_T}[h(\bm{x})] \right| - \left| \frac{1}{|\bm{S}|} \sum_{i=1}^{|\bm{S}|} h(\bm{x}_i) - \frac{1}{|\bm{T}|} \sum_{i=1}^{|\bm{T}|} h(\bm{x}_i^\prime)\right| \\
    \leq & \left| \mathbb{E}_{\mathcal{D}_S}[h(\bm{x})] - \frac{1}{|\bm{S}|} \sum_{i=1}^{|\bm{S}|} h(\bm{x}_i) \right| 
    + \left| \mathbb{E}_{\mathcal{D}_T}[h(\bm{x})] - \frac{1}{|\bm{T}|} \sum_{i=1}^{|\bm{T}|} h(\bm{x}_i^\prime) \right| \\
    \leq & \sqrt{ \frac{2 \max( \pi, 1 - \pi )\log(4/\delta) }{|\bm{S}|} } + \sqrt{ \frac{\log(4/\delta)}{2|\bm{T}|} }.
\end{aligned}
\end{equation}

Based on the inequality above, we can approximate $d_\mathcal{H}(\mathcal{D}_S, \mathcal{D}_T)$ using $d_{\mathcal{H}}(\bm{T}, \bm{S}_1)$ and $d_{\mathcal{H}}(\bm{T}, \bm{S}_2)$ as below with probability at least $1 - \delta$:
\begin{equation}\label{eq:distribution_shift}
\begin{aligned}
    d_\mathcal{H}(\mathcal{D}_S, \mathcal{D}_T) = & \sup_{h \in \mathcal{H}} \left| \mathbb{E}_{\mathcal{D}_S}[h(\bm{x})] - \mathbb{E}_{\mathcal{D}_T}[h(\bm{x})] \right| \\
    \leq & \sup_{h \in \mathcal{H}}\left| \frac{1}{|\bm{S}|} \sum_{i=1}^{|\bm{S}|} h(\bm{x}_i) - \frac{1}{|\bm{T}|} \sum_{i=1}^{|\bm{T}|} h(\bm{x}_i^\prime)\right| + \\
    &\sqrt{ \frac{2 \max(\pi, 1 - \pi) \log(4/\delta)}{|\bm{S}|} } + \sqrt{ \frac{ \log(4/\delta)}{2|\bm{T}|} } \\
    \leq & \pi \sup_{h \in \mathcal{H}} \left| \frac{1}{|\bm{T}|} \sum_{i=1}^{|\bm{T}|} h(\bm{x}_i^\prime) - \frac{1}{\pi |\bm{S}|} \sum_{i=1}^{\pi |\bm{S}|} h(\bm{x}_i) \right| +\\
    &(1 - \pi) \sup_{h \in \mathcal{H}} \left| \frac{1}{|\bm{T}|} \sum_{i=1}^{|\bm{T}|} h(\bm{x}_i^\prime) - \frac{1}{(1 - \pi) |\bm{S}|} \sum_{i = \pi |\bm{S}| + 1}^{|\bm{S}|} h(\bm{x}_i) \right| +\\
    & \sqrt{ \frac{2 \max(\pi, 1 - \pi) \log(4/\delta)}{|\bm{S}|} } + \sqrt{ \frac{ \log(4/\delta)}{2|\bm{T}|} } \\
    \leq & \pi d_\mathcal{H}(\bm{T}, \bm{S}_1) + (1 - \pi) d_\mathcal{H}(\bm{T}, \bm{S}_2) 
    + \sqrt{ \frac{2 \max(\pi, 1 - \pi) \log(4/\delta)}{|\bm{S}|} } + \sqrt{ \frac{ \log(4/\delta)}{2|\bm{T}|} }.
\end{aligned}
\end{equation}

For the empirical loss, we have:
\begin{equation}\label{eq:empirical_loss}
\begin{aligned}
    L_S(f) &= \frac{1}{|\bm{S}|} \sum_{i=1}^{|\bm{S}|} \ell(f(\bm{x}_i), y_i) \\
    &= \frac{\pi}{\pi |\bm{S}|} \sum_{i=1}^{\pi |\bm{S}|} \ell(f(\bm{x}_i), y_i) + \frac{1 - \pi}{(1 - \pi) |\bm{S}|} \sum_{i = \pi |\bm{S}| + 1}^{|\bm{S}|} \ell(f(\bm{x}_i), y_i) \\
    &= \pi L_{\bm{S}_1}(f) + (1 - \pi) L_{\bm{S}_2}(f).
\end{aligned}
\end{equation}

Note that $\mu = 1$ for loss function $\ell(f, y) \triangleq (f - y)^2/2$ when $f, y \in [0,1]$. By combining the results in Eq.(\ref{eq:generalization_test}), Eq.(\ref{eq:distribution_shift}) and Eq.(\ref{eq:empirical_loss}), and integrating the conclusion in Lemma~\ref{lemma:NTKbound}, we complete the proof.
\end{proof}

Theorem~\ref{thm:bound} provides a general theoretical understanding of LLMs trained on mixtures of real and synthetic data. Building on this foundation, we next reveal a three-phase transition in the scaling behavior of LLMs under certain assumptions on data and model in Lemma~\ref{lemma:scaling}.
\scalingLaw*
\begin{proof}
From Eq. (\ref{eq:generalization_test}) and triangle inequality, we have
\begin{equation}
\begin{aligned}
   \mathcal{L}_{\mathcal{D}_T}(f) & \leq \mathcal{L}_{\mathcal{D}_S}(f) +  d_\mathcal{H}\left(\mathcal{D}_S, \mathcal{D}_T\right) + \mathcal{L}_{\mathcal{D}_T}(f^*) + \mathcal{L}_{\mathcal{D}_S}(f^*)\\
   & \leq \mathcal{L}_{\mathcal{D}}(f) + d_\mathcal{H}\left(\mathcal{D}_S, \mathcal{D}\right) + d_\mathcal{H}\left(\mathcal{D}_S, \mathcal{D}_T\right)+\mathcal{L}_{\mathcal{D}_T}(f^*) + 2\mathcal{L}_{\mathcal{D}_S}(f^*) + \mathcal{L}_{\mathcal{D}}(f^*)\\
   & \leq \mathcal{L}_{\mathcal{D}}(f) + d_\mathcal{H}\left(\mathcal{D}, \mathcal{D}_T\right) + 2d_\mathcal{H}\left(\mathcal{D}_S, \mathcal{D}\right)+\mathcal{L}_{\mathcal{D}_T}(f^*) + 2\mathcal{L}_{\mathcal{D}_S}(f^*) + \mathcal{L}_{\mathcal{D}}(f^*),
\end{aligned}
\end{equation}
where $\mathcal{D}$ is the true distribution.

We also have
\begin{equation}
   \mathcal{L}_{\mathcal{D}}(f) \leq \mathcal{L}_{\mathcal{D}_T}(f) +  d_\mathcal{H}\left(\mathcal{D}_T, \mathcal{D}\right) +\mathcal{L}_{\mathcal{D}_T}(f^*) +  \mathcal{L}_{\mathcal{D}}(f^*).
\end{equation}

Suppose $d_\mathcal{H}\left(\mathcal{D}, \mathcal{D}_T\right)=0$ as $\mathcal{D}$ and $\mathcal{D}_T$ follow the same distribution, then we have
\begin{equation}
\begin{aligned}
    &\mathcal{L}_{\mathcal{D}_T}(f)=\mathbb{E}_{(\bm{x},y)\sim \mathcal{D}_T}[\ell(f(\bm{x}), y)] \\
    \asymp & \mathcal{L}_{\mathcal{D}}(f)+d_\mathcal{H}\left(\mathcal{D}_S, \mathcal{D}\right) = \mathbb{E}_{(\bm{x},y)\sim \mathcal{D}}[\ell(f(\bm{x}), y)]+d_\mathcal{H}\left(\mathcal{D}_S, \mathcal{D}\right)
\end{aligned}
\end{equation}
for any $f$ gained based on the training dataset $\bm{S}\sim \mathcal{D}_S$.

Calculating expectation on $\mathcal{D}_S$, we have:
\begin{equation}
\begin{aligned}
    E_\text{test} &= \mathbb{E}_{\mathcal{D}_S}\left[ \mathbb{E}_{(\bm{x},y)\sim \mathcal{D}_T}[\ell(f(\bm{x}), y)] \right]\\
    &\asymp \mathbb{E}_{\mathcal{D}_S}\left[\mathbb{E}_{(\bm{x},y)\sim \mathcal{D}}[\ell(f(\bm{x}), y)]\right]+\mathbb{E}_{\mathcal{D}_S}\left[d_\mathcal{H}\left(\mathcal{D}_S, \mathcal{D}\right)\right].
\end{aligned}
\end{equation}

For the first term,
\begin{equation}
\begin{aligned}
    &\mathbb{E}_{\mathcal{D}_S}\left[\mathbb{E}_{(\bm{x},y)\sim \mathcal{D}}[\ell(f(\bm{x}), y)]\right]\\
    \asymp & \sum_{i \geq 1} p_i \left[ (1 - (1 - q_i)^{|\bm{S}|}) (1 - \rho(i)) + (1 - q_i)^{|\bm{S}|} (1 - \gamma(i)) \right] \\
    \asymp & \sum_{i \geq 1} p_i (1 - \rho(i)) + \sum_{1 \leq i \leq k} p_i (\rho(i) - \gamma(i))(1 - p_i)^{|\bm{S}|} + \sum_{i \geq k+1} p_i (\rho(i) - \gamma(i))(1 - \pi p_i)^{|\bm{S}|} \\
    \asymp & \frac{1}{\beta - 1} - \frac{a}{\alpha + \beta - 1} + \frac{a}{\beta} {|\bm{S}|}^{\frac{1- \alpha - \beta}{\beta}} \left[ \Gamma\left(\frac{\alpha + \beta - 1}{\beta}, {|\bm{S}|} k^{-\beta} \right) - \Gamma\left(\frac{\alpha + \beta - 1}{\beta}, {|\bm{S}|} \right) \right] -\\
    & \frac{b}{\beta} {|\bm{S}|}^{\frac{1- \lambda - \beta}{\beta}} \left[ \Gamma\left(\frac{\lambda + \beta - 1}{\beta}, {|\bm{S}|} k^{-\beta} \right) - \Gamma\left(\frac{\lambda + \beta - 1}{\beta}, {|\bm{S}|} \right) \right]-\\
    & \frac{a}{\beta} (\pi {|\bm{S}|})^{\frac{1- \alpha - \beta}{\beta}} \Gamma\left(\frac{\alpha +\beta - 1}{\beta}, \pi {|\bm{S}|} (k + 1)^{-\beta} \right) + \\
    &\frac{b}{\beta} (\pi {|\bm{S}|})^{\frac{1- \lambda - \beta}{\beta}} \Gamma\left(\frac{\lambda +\beta - 1}{\beta}, \pi {|\bm{S}|} (k + 1)^{-\beta} \right),
\end{aligned}
\end{equation}
where $\Gamma(s, x) = \int_{x}^{\infty} t^{s-1} e^{-t} \, dt$ is the upper incomplete gamma function.

When $|\bm{S}|\leq c_1 k^\beta$, where $c_1$ is a constant, we have $\Gamma\left(\frac{\alpha + \beta - 1}{\beta}, {|\bm{S}|} k^{-\beta} \right) - \Gamma\left(\frac{\alpha + \beta - 1}{\beta}, {|\bm{S}|} \right) = \Theta(1)-o(1)=\Theta(1)$; when $|\bm{S}| > c_1 k^\beta$, we have $\Gamma\left(\frac{\alpha + \beta - 1}{\beta}, {|\bm{S}|} k^{-\beta} \right) - \Gamma\left(\frac{\alpha + \beta - 1}{\beta}, {|\bm{S}|} \right) = o(1)-o(1)=o(1)$. Similarly, when $|\bm{S}|\leq c_1 k^\beta$, where $c_1$ is a constant, we have $\Gamma\left(\frac{\lambda + \beta - 1}{\beta}, {|\bm{S}|} k^{-\beta} \right) - \Gamma\left(\frac{\lambda + \beta - 1}{\beta}, {|\bm{S}|} \right) = \Theta(1)-o(1)=\Theta(1)$; when $|\bm{S}| > c_1 k^\beta$, we have $\Gamma\left(\frac{\lambda + \beta - 1}{\beta}, {|\bm{S}|} k^{-\beta} \right) - \Gamma\left(\frac{\lambda + \beta - 1}{\beta}, {|\bm{S}|} \right) = o(1)-o(1)=o(1)$. The test loss for knowledge $1$ to $k$ is related to $\sum_{1\leq i \leq k} p_i \left[ (1 - (1 - q_i)^{|\bm{S}|}) (1 - \rho(i)) + (1 - q_i)^{|\bm{S}|} (1 - \gamma(i)) \right]$, thus the breakpoint for head knowledge is $|\bm{S}| = c_1 k^\beta$.
When $\pi |\bm{S}|\geq c_2 k^\beta$, where $c_2$ is a constant, we have $\Gamma\left(\frac{\alpha +\beta - 1}{\beta}, \pi {|\bm{S}|} (k + 1)^{-\beta} \right)= \Theta(1)$; when $\pi |\bm{S}| < c_2 k^\beta$, we have $\Gamma\left(\frac{\alpha +\beta - 1}{\beta}, \pi {|\bm{S}|} (k + 1)^{-\beta} \right) = \frac{\beta}{1-\alpha-\beta}\Theta((\pi |\bm{S}| k^{-\beta})^{\frac{\alpha+\beta-1}{\beta}})$. Similarly, when $\pi |\bm{S}|\geq c_2 k^\beta$, where $c_2$ is a constant, we have $\Gamma\left(\frac{\lambda +\beta - 1}{\beta}, \pi {|\bm{S}|} (k + 1)^{-\beta} \right) =\Theta(1)$; when $\pi |\bm{S}| < c_2 k^\beta$, we have $\Gamma\left(\frac{\lambda +\beta - 1}{\beta}, \pi {|\bm{S}|} (k + 1)^{-\beta} \right) = \frac{\beta}{1-\lambda-\beta}\Theta((\pi |\bm{S}| k^{-\beta})^{\frac{\lambda+\beta-1}{\beta}})$. The test loss for knowledge beyond rank $k$ is related to $\sum_{i \geq k+1} p_i \left[ (1 - (1 - q_i)^{|\bm{S}|}) (1 - \rho(i)) + (1 - q_i)^{|\bm{S}|} (1 - \gamma(i)) \right]$, thus the breakpoint for tail knowledge is $|\bm{S}| = c_2 k^\beta/\pi$.

For the second term,
\begin{equation}
\mathbb{E}_{\mathcal{D}_S}\left[d_\mathcal{H}\left(\mathcal{D}_S, \mathcal{D}\right)\right]\asymp\sum_{i=k+1}^{\infty} p_i =\int_{k+1}^{\infty} x^{-\beta}dx=\frac{(k+1)^{1-\beta}}{\beta - 1}.
\end{equation}
By combining the results above, we complete the proof.
\end{proof}

\section{Pseudo Code}\label{sec:pseudocode}
This section presents the pseudo code for the proposed data valuation method in Algorithm~\ref{alg:valuation}.
\begin{algorithm}[ht]
\caption{LLM Data Valuation}
\label{alg:valuation}
\begin{algorithmic}[1]
\STATE \textbf{Input:} Datasets $\{\bm{S}^{(i)}\}_{i=1}^K$ from $K$ contributors, each consists of real and synthetic mixtures; validation set $\bm{T}$; model $f$ with initialized NTK kernel matrix $\bm{\Theta}_0$; weighting coefficients $w_1, w_2, w_3, w_4$.
\FOR{$i = 1$ to $K$}
    \STATE Evaluate $v(\bm{S}^{(i)})$ by (\ref{eq:valfunc}).
\ENDFOR
\STATE \textbf{Output:} Valuation scores $\{v(\bm{S}^{(i)})\}_{i=1}^K$.
\end{algorithmic}
\end{algorithm}

\section{Generalization to Marginal Evaluation}\label{sec:marginalDV}
In LLM-scale training datasets, computing marginal contributions through retraining-based methods like leave-one-out or Shapley value becomes computationally costly due to the model size and dataset scale. To address this, we utilize $v(\bm{S})$ as the data valuation function, following the existing work~\citep{arxiv2024logra,park2023trak}.

While this method is our default for LLM-scale applications, our scoring function $v(\bm{S})$ remains compatible with marginal estimation for general-purpose data valuation scenarios with smaller models or datasets. Given a collection of data contributors $\{\bm{S}^{(1)}, \dots, \bm{S}^{(K)}\}$, we define the marginal contribution of contributor $i$ with respect to a coalition $C \subseteq [K] \setminus \{i\}$ as:
\begin{equation}
    \Delta_{i, C} = v(\bm{S}^{C} \cup \bm{S}^{(i)}) - v(\bm{S}^{C}),
\end{equation}
where $\bm{S}^{C} = \{\bm{S}^{(i)}\}_{i\in C}$. The final value of $\bm{S}^{(i)}$ can then be aggregated over all coalitions via:
\begin{equation}
    \phi_i = \sum_{C \subseteq [K] \setminus \{i\}} w_C \times \Delta_{i, C},
\end{equation}
where $w_{\mathcal{C}} \geq 0$ denotes coalition weights. In particular, for the SV~\citep{ghorbani2019data}, $w_{\mathcal{C}} = |\mathcal{C}|! (K - |\mathcal{C}| - 1)!/K!$. For LOO~\citep{koh2017understanding,DBLP:conf/nips/KohATL19}, $w_{\mathcal{C}} = \mathbbm{1}_{\mathcal{C} = \subseteq [K] \setminus \{i\}}$.

\section{Experiment Details}\label{sec:exp-detail}
Here we provide expanded descriptions of the tasks and datasets, baselines, and implementation details that complement Section~\ref{sec:exp-setup}. 
\subsection{Tasks and Datasets}
\textbf{Image Classification.} We use the CIFAR-100 dataset~\citep{krizhevsky2009learning} as the real data, and generate synthetic data by applying corruption transformations from the CIFAR-100-C benchmark~\citep{DBLP:conf/iclr/HendrycksD19}. These transformations include noise (Gaussian, shot, impulse), blur (defocus, glass, motion, zoom), weather (snow, frost, fog, brightness), and digital (contrast, elastic, pixelation, JPEG artifacts). We treat each class as a separate data contributor and use retrained accuracy as the ground-truth for evaluation. We construct the dataset using a long-tail distribution over classes, with the frequency of class $i$ is set as $p_i \propto i^{-2}$. Each data contributor is assigned all the data from a single class, resulting in a total of 100 contributors. The proportion of real data for each contributor is fixed at $\pi = 6.25\%$. 

\textbf{Sentiment Classification.} We use the IMDb~\citep{maas2011learning} as the real dataset and the FinGPT Sentiment Train dataset~\citep{yang2023fingpt} as synthetic data. For evaluation, we use the SST-2~\citep{DBLP:conf/emnlp/SocherPWCMNP13} as the test set. Since the test set contains only positive and negative labels, we filter the training data to include only samples with positive and negative labels, excluding neutral samples, to ensure consistency in the binary classification setup. The evaluation metric is accuracy, calculated as the proportion of test samples where the predicted label matches the ground-truth label. We use 10 data contributors, as detailed in Table~\ref{tab:sentiment_data}. 

\begin{table}[ht]
\centering
\caption{Data composition of each contributor $\bm{S}^{(i)}$ in the sentiment classification task. All sample counts are reported in thousands (k), and each contributor contains 14k samples with varying real data proportion $\pi$.}
\begin{tabular}{l|cccccccccc}
\toprule
Contributor & $\bm{S}^{(1)}$ & $\bm{S}^{(2)}$ & $\bm{S}^{(3)}$ & $\bm{S}^{(4)}$ & $\bm{S}^{(5)}$ & $\bm{S}^{(6)}$ & $\bm{S}^{(7)}$ & $\bm{S}^{(8)}$ & $\bm{S}^{(9)}$ & $\bm{S}^{(10)}$ \\
\midrule
Real samples & 9 & 8 & 7 & 6 & 5 & 4 & 3 & 2 & 1 & 0 \\
Synthetic samples & 5 & 6 & 7 & 8 & 9 & 10 & 11 & 12 & 13 & 14 \\
Total samples & 14 & 14 & 14 & 14 & 14 & 14 & 14 & 14 & 14 & 14 \\
$\pi$ & 64\% & 57\% & 50\% & 43\% & 36\% & 29\% & 21\% & 14\% & 7\% & 0\% \\
\bottomrule
\end{tabular}
\label{tab:sentiment_data}
\end{table}

\textbf{Instruction Following.} We use the Natural-Instructions dataset~\citep{mishra2021natural} as the real dataset and the Magpie-Pro-1M dataset~\citep{xu2024magpie} as the synthetic dataset. The test set is the IFEval benchmark~\citep{zhou2023instruction}. Evaluation is conducted using the IFEval criteria, which include metrics such as instruction-following accuracy—assessing whether the model’s output adheres to the instruction’s intent, format, and constraints, as defined by a set of predefined rules and templates. We use 4 data contributors, as detailed in Table~\ref{tab:instruction_data}.

\begin{table}[ht]
\centering
\caption{Data composition of each contributor $\bm{S}^{(i)}$ in the instruction following task. All sample counts are reported in millions (m), and each contributor contains a different total number of samples with a fixed real data proportion $\pi$.}
\begin{tabular}{l|ccccc}
\toprule
Contributor & $\bm{S}^{(1)}$ & $\bm{S}^{(2)}$ & $\bm{S}^{(3)}$ & $\bm{S}^{(4)}$ \\
\midrule
Real samples & 0.077 & 0.077 & 0.077 & 0.077 \\
Synthetic samples & 0.180 & 0.077 & 0.033 & 0.009 \\
Total samples & 0.257 & 0.154 & 0.110 & 0.086 \\
$\pi$ & 30\% & 50\% & 70\% & 90\% \\
\bottomrule
\end{tabular}
\label{tab:instruction_data}
\end{table}

\textbf{Complex Reasoning.} We use the human-annotated portions of the NuminaMath-CoT training set~\citep{li2024numinamath} as real data and the synthetically generated portions as synthetic data. The test set is the NuminaMath-CoT test set. During training, we perform supervised fine-tuning (SFT) by providing complete reasoning steps and final answers to encourage the model to learn CoT reasoning. For evaluation, we employ a powerful language model \textit{Qwen3-32B}~\citep{qwen3} as the judgment model to grade the model’s output using in-context learning: given the ground-truth reasoning steps and answer alongside the model’s output, the judgment model determines correctness. The output is deemed correct only if both the reasoning steps and the final answer match the reference solution. For this task, we construct 100 separate data contributors. Specifically, we partition both the real and synthetic datasets into 5050 equal-sized and non-overlapping partitions. Contributor $i \in \{1,\dots,100\}$ is then assigned $(101-i)$ partitions of real data and $(i-1)$ partitions of synthetic data, yielding 100 contributors with varying synthetic-data proportions $\pi$. 

\subsection{Baselines}
We compare against four representative baselines designed for efficient data valuation. These baselines are selected based on two criteria: (1) they do not require repeated model retraining, making them scalable to LLMs; and (2) they operate with access to checkpoints, gradients, and training/test data.
\begin{itemize}[left=0pt, topsep=0pt, itemsep=0pt]
    \item \textbf{DAVINZ}~\citep{wu2022davinz}, which computes data values from NTK-based approximations at initialization.
    \item \textbf{Deviation}~\citep{lin2024deviation}, which measures deviation in model predictions via kernel ridge regression.
    \item \textbf{LOGRA}~\citep{arxiv2024logra}, a label-only gradient attribution method.
    \item \textbf{TracIn}~\citep{pruthi2020estimating}, which tracks training-time gradient similarity.
    \item \textbf{TRAK}~\citep{park2023trak}, which approximates influence scores using randomized kernel projections.
\end{itemize}

\subsection{Implementation Details}
All image classification experiments are conducted on a single NVIDIA A100 GPU (80GB). All LLM experiments, including sentiment classification, instruction following, and complex reasoning, are conducted on NVIDIA A100 GPUs (80GB each). To combine the four components in our proposed score (Eq.(\ref{eq:valfunc})), we treat their respective weights as tunable hyperparameters. In our paper, we optimize the weights $w_1, w_2, w_3, w_4$ by fitting a linear regression, where the target is the average of the empirical loss and the MMD score.

\section{Supplementary Experiments}\label{sec:more-exp}
This section reports additional experimental analysis complementing the main results, including ranking visualization across data valuation methods. 

\begin{figure}[ht]
\centering
\includegraphics[width=0.85\textwidth]{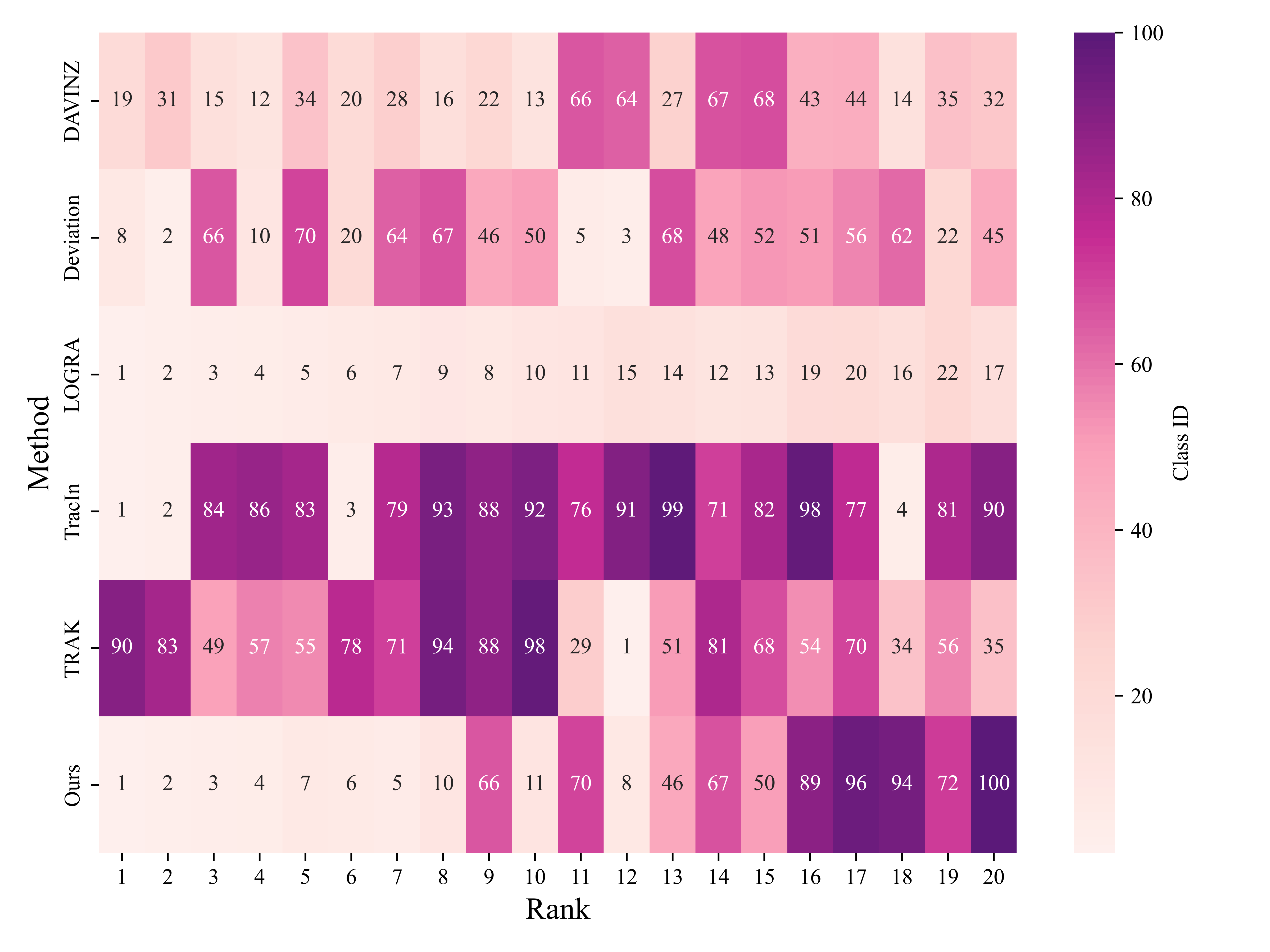}
\caption{Top-20 contributor selections across data valuation methods on image classification task, where cell colors represent the contributor's class ID.}
\label{fig:toprank_cifar}
\end{figure}

To complement correlation-based summaries, we visualize the top-ranked contributors selected by each method on the image classification task. In this task, each contributor corresponds to a single class (100 classes in total), ordered by decreasing sample size (C1 has the most samples while C100 has the fewest). Figure~\ref{fig:toprank_cifar} reports the top-20 classes (contributors) deemed most valuable by each data valuation method. We observe that our approach uniquely balances head classes with tail classes, whereas LOGRA, DAVINZ, and Deviation concentrate primarily on head classes, and TracIn and TRAK tend to prioritize tail classes.

\end{document}